\newcommand{\lovasz}{Lov\'asz}
\author{ {\bf Rishabh Iyer} \\
Dept.\ of Electrical Engineering\\  
University of Washington\\ 
Seattle, WA-98175, USA
\and 
{\bf Jeff Bilmes} \\ 
Dept.\ of Electrical Engineering \\  
University of Washington\\ 
Seattle, WA-98175, USA
} 
\newcommand{\myaddcomment}[3]{{\color{#1}{\ensuremath{\langle\!\!\langle}{\bf {#2} :} {#3}\ensuremath{\rangle\!\!\rangle}}}}
\newcommand{\rishabh}[1]{\myaddcomment{LimeGreen}{Rishabh}{#1}}
\newcommand{\JTR}[1]{\myaddcomment{LimeGreen}{Jeff\ensuremath{\rightarrow}Rishabh}{#1}}
\newcommand{\jeff}[1]{\myaddcomment{blue}{Jeff}{#1}}
\newcommand{\RTJ}[1]{\myaddcomment{blue}{Rishabh\ensuremath{\rightarrow}Jeff}{#1}}
\newcommand{\toboth}[1]{\myaddcomment{red}{Rishabh \& Jeff}{#1}}
\newcommand{\rishabh}[1]{}
\newcommand{\JTR}[1]{}
\newcommand{\jeff}[1]{}
\newcommand{\RTJ}[1]{}
\newcommand{\toboth}[1]{}
\providecommand{\doarxiv}{true}
\newcommand{\arxiv}[1]{#1}
\newcommand{\notarxiv}[1]{}
\newcommand{\arxiv}[1]{}
\newcommand{\notarxiv}[1]{#1}
\newcommand{\arxivalt}[2]{\ifthenelse{\boolean{isarxiv}}{#1}{#2}}
\DeclareMathOperator*{\argmin}{argmin}
\newtheorem{theorem}{Theorem}[section]
\newtheorem{corollary}{Corollary}[theorem]
\newtheorem{lemma}{Lemma}[section]
\newtheorem{example}{Example}[section]
\newtheorem{proposition}{Proposition}[section]
\newcommand{\ri}{\ensuremath{\mbox{reint}}}
\newcommand{\dom}{\ensuremath{\mbox{dom}}}
\newcommand{\cb}{\ensuremath{d_\phi}} 
\newcommand{\cbh}{\ensuremath{d_{\phi}^{\mathcal H_{\phi}}}} 
\newcommand{\LB}{\ensuremath{d_{\hat{f}}}} 
\newcommand{\Sigmas}{\ensuremath{\boldsymbol \Sigma}}
\newcommand{\h}[1]{\ensuremath{\mathcal H_{#1}}} 
\newcounter{propcounter}
\title{The Lov\'asz-Bregman Divergence and connections to \\rank aggregation, clustering, and web ranking\thanks{A shorter version of this appeared in Proc. Uncertainty in Artificial Intelligence (UAI), Bellevue, 2013}}
\begin{document} 
\maketitle


%
%
 
%
%

\begin{abstract}%
  We extend the recently introduced theory of \lovasz{} Bregman (LB)
  divergences~\cite{rkiyersubmodBregman2012} in several ways. We show
  that they represent a distortion between a ``score'' and an
  ``ordering'', thus providing a new view of rank aggregation and order
  based clustering with interesting connections to web ranking. We
  show how the LB divergences have a number of properties akin to many
  permutation based metrics, and in fact have as special cases forms
  very similar to the Kendall-$\tau$ metric. We also show how the LB
  divergences subsume a number of commonly used ranking measures in
  information retrieval, like NDCG~\cite{jarvelin2000ir} and
  AUC~\cite{spackman1989signal}. Unlike the traditional permutation
  based metrics, however, the LB divergence naturally captures a
  notion of ``confidence'' in the orderings, thus providing a new
  representation to applications involving aggregating scores as
  opposed to just orderings. We show how a number of
  recently used web ranking models are forms of \lovasz{} Bregman rank
  aggregation and also observe that a natural form of Mallow's model
  using the LB divergence has been used as conditional ranking models
  for the ``Learning to Rank'' problem. \looseness-1
\end{abstract}

\section{Introduction} 
The Bregman divergence first appeared in the context of relaxation
techniques in convex programming~\cite{bregmanoriginal}, and has
found numerous applications as a general framework in clustering~\cite{banerjee2005}, proximal minimization (\cite{censorparallel}),
and others. Many of these applications are
due to the nice properties of the Bregman divergence, and the fact
that they are parameterized by a single convex function.  They also
generalize a large class of divergences between vectors. \arxiv{Recently
Bregman divergences have also been defined between matrices
\cite{tsuda2006matrix}, between functions
\cite{frigyik2008functional} and also between sets~\cite{rkiyersubmodBregman2012}.} \looseness-1

In this paper, we investigate a specific class of Bregman divergences, parameterized via the \lovasz{} extension of a submodular function. Submodular functions are a special class of discrete functions with interesting properties. Let $V$ refer to a finite ground set $\{ 1, 2, \dots, |V|\}$. A set
function $f: 2^V \to \mathbb{R}$ is submodular if $\forall S, T
\subseteq V$, $f(S) + f(T) \geq f(S \cup T) + f(S \cap T)$. Submodular
functions have attractive properties that make their exact or approximate
optimization efficient and often practical~\cite{fujishige2005submodular, rkiyersemiframework2013}. They also naturally arise in
many problems in machine learning, computer vision, economics,
operations research, etc.  A link between convexity and submodularity is seen via
  the \lovasz{} extension (\cite{edmondspolyhedra,lovasz1983}) of the
  submodular function. While submodular functions are growing phenomenon in machine learning, recently there has been an increasing set of applications for the \lovasz{} extension. In particular, recent work~\cite{bach2010structured, frbach1} has shown nice connections between the \lovasz{} extension and structured sparsity inducing norms. \looseness-1

This work is concerned with yet another application of the \lovasz{} extension, in the form of the \lovasz-Bregman divergence. This was first introduced in Iyer \& Bilmes~\cite{rkiyersubmodBregman2012}, in the context of clustering ranked vectors. We extend our work in several ways, mainly theoretically, by both showing a number of connections to the permutation based metrics, to rank aggregation, to rank based clustering and to the ``Learning to Rank'' problem in web ranking.\looseness-1

\subsection{Motivation}
The problems of rank aggregation and rank based clustering are
ubiquitous in machine learning, information retrieval, and social
choice theory. Below is a partial list of some of these
applications.\looseness-1

\paragraph{Meta Web Search: } We are given a collection of search engines, each providing a ranking or a score vector, and the task is to aggregate these to generate a combined result~\cite{lebanon2002cranking}.\looseness-1

\paragraph{Learning to Rank: } The ``Learning to rank'' problem, which is a fundamental problem in machine learning, involves constructing a ranking model from training data. This problem has gained significant interest in web ranking and information retrieval~\cite{liu2009learning}.\looseness-1

\paragraph{Voter or Rank Clustering: }This is an important problem in
social choice theory, where each voter provides a ranking or assigns a
score to every item. A natural problem here is to meaningfully combine
these rankings~\cite{klementiev2008unsupervised}. Sometimes however
the population is heterogeneous and a mixture of distinct populations,
each with its own aggregate representative, fits better.\looseness-1

\paragraph{Combining Classifiers and Boosting: } There has been an
increased interest in combining the output of different systems in an
effort to improve performance of pattern classifiers\arxiv{, something often
used in Machine Translation~\cite{rosti2007combining} and Speech
Recognition\cite{kirchhoff1998combining}}. One way of doing
this~\cite{lebanon2002cranking} is to treat the output of every
classifier as a ranking and combine the individual rankings of weak
classifiers to obtain the overall classification. This is akin to
standard boosting techniques~\cite{freund2003efficient}, except that
we consider rankings rather than just the valuations.

 \subsection{Permutation Based Distance Metrics}
 First a bit on notation -- a permutation $\sigma$ is a bijection from
 $[n] = \{1, 2, \cdots, n\}$ to itself. Given a permutation $\sigma$,
 we denote $\sigma^{-1}$ as the inverse permutation such that
 $\sigma(i)$ is the item assigned rank $i$, while $\sigma^{-1}(j)$ is
 the rank\footnote{This is opposite from the convention used
   in~\cite{lebanon2002cranking, klementiev2008unsupervised,
     kendall1938new, meilua2007consensus} but follows the convention
   of~\cite{fujishige2005submodular}.}  assigned to item
 $j$ and hence $\sigma(\sigma^{-1}(i)) = i$.  We shall use $\sigma_x$
 to denote a permutation induced through the ordering of a vector $x$
 such that $x(\sigma_x(1)) \geq x(\sigma_x(2)) \cdots \geq
 x(\sigma_x(n))$. Without loss of generality, we assume that the
 permutation is defined via a decreasing order of elements.  We shall
 use $v(i), v[i]$ and $v_i$ interchangeably to denote the $i$-th
 element in $v$. Given two permutations $\sigma, \pi$ we can define
 $\sigma \pi$ as the combined permutation, such that $\sigma\pi(i) =
 \sigma(\pi(i))$. Also given a vector $x$ and a permutation $\sigma$,
 define $x\sigma$ such that $x\sigma(i) = x(\sigma(i))$. We also
 define $\sigma x$ as $\sigma x(i) = x(\sigma^{-1}(i))$.

 Recently a number of papers~\cite{lebanon2002cranking,
   klementiev2008unsupervised, kendall1938new, meilua2007consensus}
 have addressed the problem of combining rankings using permutation
 based distance metrics.  Denote $\Sigmas$ as the set of permutations
 over $[n]$. Then $d: \Sigmas \times \Sigmas \rightarrow \mathbb{R}_+$
 is a permutation based distance metric if it satisfies the usual
 notions of a metric, viz.\ $,\forall \sigma, \pi, \tau \in \Sigmas,
 d(\sigma, \pi) \geq 0$ and $d(\sigma, \pi) = 0$ iff $\sigma = \pi$,
 $d(\sigma, \pi) = d(\pi, \sigma)$ and $d(\sigma, \pi) \leq d(\sigma,
 \tau) + d(\tau, \pi)$. In addition, to represent a distance amongst
 permutations, another property which is usually required is that of
 left invariance to reorderings, i.e., $d(\sigma, \pi) = d(\tau \sigma,
 \tau \pi)$\footnote{While in the literature this is called right
   invariance, we have left invariance due to our notation}. The most
 standard notion of a permutation based distance metric is the Kendall
 $\tau$ metric~\cite{kendall1938new}:
\begin{align}
d_T(\sigma, \pi) = \sum_{i, j, i < j} I(\sigma^{-1} \pi(i) > \sigma^{-1} \pi(j))
\end{align}
Where $I(.)$ is the indicator function. This distance metric
represents the number of swap operations required to convert a
permutation $\sigma$ to $\pi$. It's not hard to see that it is a
metric and it satisfies the ordering invariance property.  Other often
used metrics include the Spearman's footrule $d_S$ and rank
correlation $d_R$~\cite{critchlow1985metric}:\looseness-1
\begin{align}
\label{spearman} d_S(\sigma, \pi) \!=\!\! \sum_{i = 1}^n |\sigma^{-1}(i) - \pi^{-1}(i)|
\\
\label{rankc} d_R(\sigma, \pi) \!=\!\! \sum_{i = 1}^n (\sigma^{-1}(i) - \pi^{-1}(i))^2 
\end{align}

 A natural extension to a ranking model is the Mallows model~\cite{mallows1957non}, which is an exponential model defined based on these permutation based distance metrics. This is defined as:
\begin{align}
p(\pi | \theta, \sigma) = \frac{1}{Z(\theta)} \exp(-\theta d(\pi, \sigma)), \;\text{ with }\; \theta \geq 0.
\end{align}
This model has been generalized by~\cite{fligner1986distance} and also extended to multistage ranking by~\cite{fligner1988multistage}. Lebanon and Lafferty~\cite{lebanon2002cranking} were amongst the first to use these models in machine learning by proposing an extended mallows model~\cite{fligner1986distance} to combine rankings in a manner like adaboost~\cite{freund2003efficient}. Similarly Meila \textit{et al}~\cite{meilua2007consensus} use the generalized Mallows model to infer the optimal combined ranking. Another related though different problem is clustering ranked data, investigated by~\cite{murphy2003mixtures}, where they provide a $k$-means style algorithm. This was also extended to a machine learning context by~\cite{busse2007cluster}. \looseness-1

\subsection{Score based Permutation divergences}
In this paper, we motivate another class of divergences, which capture
the notion of distance between permutations. Unlike the permutation
based distance metrics, however, these are distortion functions
between a ``score'' and a permutation.  This, as we shall see, offers
a new view of rank aggregation and order based clustering problems. We
shall also see a number of interesting connections to web
ranking.\looseness-1
  
Consider a scenario where we are given a collection of scores $x^1,
x^2, \cdots, x^n$ as opposed to just a collection of orderings --
i.e., each $x^i$ is an ordered vector and not just a permutation. This
occurs in a number of real world applications. For example, in the
application of combining classifiers~\cite{lebanon2002cranking}, the
classifiers often output scores (in the form of say normalized
confidence or probability distributions). While the rankings
themselves are informative, it is often more beneficial to use the
additional information in the form of scores if available. This in
some sense combines the approach of
Adaboost~\cite{freund2003efficient} and
Cranking~\cite{lebanon2002cranking}, since the former is concerned
only with the scores while the latter takes only the orderings.  The
case of voting is similar, where each voter might assign scores to
every candidate (which can sometimes be easier than assigning an
ordering). This also applies to web-search where often the individual
search engines (or possibly features) provide a confidence score for
each webpage. Since these applications provide both the valuations and
the rankings, we call these \emph{score based ranking
  applications.}\looseness-1

A \emph{score based permutation divergence} is defined as
follows. Given a convex set $\mathbb{S}$, denote $d: \mathbb{S} \times
\Sigmas \rightarrow \mathbb{R}_+$ as a score based permutation
divergence if $\forall x \in \mathbb{S}, \sigma \in \Sigmas, d(x ||
\sigma) \geq 0$ and $d(x || \sigma) = 0$ if and only if $\sigma_x =
\sigma$. Another desirable property is that of left invariance,
viz.\ $d(x || \sigma)= d(\tau x || \tau \sigma), \forall \tau, \sigma
\in \Sigmas, x \in \mathbb{S}$.

It is then immediately clear how the score based permutation
divergence naturally models the above scenario. The problem 
becomes one of finding a representative ordering, i.e., find a
permutation $\sigma$ that minimizes the average distortion to the
set of points $x^1, \cdots, x^n$. Similarly, in a clustering
application, to cluster a set of ordered scores, a score based
permutation divergence fits more naturally. The representatives for
each cluster are permutations, while the objects themselves are
ordered vectors. Notice that in both cases, a purely permutation based
distance metric would completely ignore the values, and just consider
the induced orderings or permutations. To our knowledge, this work is the first time that the notion of a score based permutation divergence has been introduced formally, thus providing a novel view to the rank aggregation and rank based clustering problems. 
 
\subsection{Our Contributions}

In this paper, we investigate several theoretical properties of one
such score based permutation divergence -- the LB
divergence. This work builds on our previous work~\cite{rkiyersubmodBregman2012}, where we introduce the
\lovasz-Bregman divergence. Our focus therein is mainly on the connections
between the \lovasz{} Bregman and a discrete Bregman divergence connected with submodular functions and we also provide a k-means framework for clustering
ordered vectors. In the present paper, however, we make the connection to rank
aggregation and clustering more precise, by motivating the class of
score based permutation divergences and showing relations to
permutation based metrics and web ranking. \arxivalt{We show several new theoretical properties and
interesting connections, summarized below:\looseness-1

\begin{itemize}
\item We introduce a novel notion of the generalized Bregman divergence based on a ``subgradient map''. While this is of independent theoretical interest, it helps us characterize the \lovasz-Bregman divergence.\looseness-1

\item We show that the LB divergence is indeed a score based permutation divergence with several similarities to permutation based metrics. \looseness-1

\item We show that a form of weighted Kendall $\tau$, and a form related to the Spearman's Footrule, occurs as instances of the \lovasz-Bregman divergences. \looseness-1

\item We also show how a number of loss functions used in IR and web ranking like the Normalized Discounted Cumulative Gain (NDCG) and the Area Under the Curve (AUC) occur as instances of the \lovasz{} Bregman.

\item We show some unique properties of the LB divergences not present in permutation-distance metrics. Notable amongst these are the properties that they naturally captures a notion of ``confidence'' of an ordering, and exhibit a priority for higher rankings, both of which are desirable in score based ranking applications.  \looseness-1

\item We define the \lovasz{}-Mallows model as a conditional model over both the scores and the ranking. We also show how the LB divergence can be naturally extended to partial orderings.\looseness-1

\arxiv{\item We show an application of the LB divergence as a convex regularizer in optimization problems sensitive to rankings, and also certain connections to the structured norms defined in~\cite{frbach1}.}

\item We finally connect the LB divergence to rank aggregation and rank based clustering. We show in fact that a number of ranking models for web ranking used in the past are instances of \lovasz{} Bregman rank aggregation. We also show that a number of conditional models used in the past for learning to rank are closely related to the \lovasz{}-Mallows model.
\end{itemize}}{

The following are some of our main results. We introduce a novel
notion of the generalized Bregman divergence based on a ``subgradient
map''. While this is of independent theoretical interest, it helps us
characterize the \lovasz-Bregman divergence. We then show that the LB
divergence is indeed a score based permutation divergence with several
similarities to permutation based metrics. In fact, we show that a
form of weighted Kendall $\tau$, and a form related to the Spearman's
Footrule, occurs as instances of the \lovasz-Bregman divergences.  We
also show how a number of loss functions used in IR and web ranking
like the Normalized Discounted Cumulative Gain
(NDCG)~\cite{jarvelin2000ir} and the Area Under the Curve
(AUC)~\cite{spackman1989signal} occur as instances of the LB. We then
demonstrate some unique properties of the \lovasz{} Bregman divergence
not present in permutation-distance metrics. Notable amongst these are
the properties that the \lovasz-Bregman naturally captures a notion of
``confidence'' of an ordering, and exhibits a priority for higher
rankings, both of which are desirable in score based ranking
applications.  We then define a \lovasz{}-Mallows model as a
conditional model over both the scores and the ranking. We finally
connect the \lovasz{} Bregman to rank aggregation and rank based
clustering. We show in fact that a number of ranking models for web
ranking used in the past are instances of \lovasz{} Bregman rank
aggregation. We moreover show that a number of conditional models used
in the past for learning to Rank are closely related to the
\lovasz{}-Mallows model.\looseness-1}

\section{The \lovasz{} Bregman divergences}
In this section, we shall briefly review the \lovasz{} extension and
define forms of the generalized Bregman and the LB
divergence. \notarxiv{We only state the main results here and for a
  more extensive discussion, refer to~\cite{rkiyerlovBregmanext2013}.}

\subsection{The Generalized Bregman divergences} \label{GenBregSec}
The notion used in this section follows from~\cite{rockafellar1970convex, telgarsky2012agglomerative}. We denote $\phi$ as a proper convex function (i.e., it's domain is non-empty and it does not take the value $-\infty$), $\ri(.)$ and $\dom(.)$ as the relative interior and domain respectively. \arxiv{Recall that $\dom(x) = \{x \in \mathbb{R}^n: \phi(x) < \infty\}$ and the relative interior of a set $S$ is defined as $\ri(S) = \{x \in S : \exists \epsilon > 0, B_{\epsilon}(x) \cap \text{aff}(S) \subseteq S\}$ where $B_{\epsilon}(x)$ is a ball of radius $\epsilon$ around $x$ and $\text{aff}(S)$ is the affine hull of $S$. } 
A subgradient $g$ at $y \in \dom(\phi)$ is such that for any $x, \phi(x) \geq \phi(y) + \langle g, x - y \rangle$ and the set of all subgradients at $y$ is the subdifferential and is denoted by $\partial_{\phi}(y)$.\looseness-1

The Taylor series approximation of a twice differentiable convex function provides a
natural way of generating a 
Bregman divergence~(\cite{bregmanoriginal}). 
Given a twice differentiable and strictly convex function $\phi$: 
\begin{align} \label{eq:Bregman}
\cb(x, y) = \phi(x) - \phi(y) - \langle \nabla \phi(y), x - y \rangle.
\end{align}

In order to extend this notion to non-differentiable convex functions, generalized Bregman divergences have been proposed~\cite{telgarsky2012agglomerative, kiwielproximal}. While gradients no longer exist at points of non-differentiability, the directional derivatives exist in the relative interior of the domain of $\phi$, as long as the function is finite. Hence a natural formulation is to replace the gradient by the directional derivative, a notion which has been pursued in~\cite{telgarsky2012agglomerative, kiwielproximal}. 

In this paper, we view the generalized Bregman divergences slightly differently, in a way related to the approach in~\cite{gordon}. In order to ensure that the subgradients exist, we only consider the relative interior of the domain. Then 
define $\h{\phi}(y)$ as a subgradient-map such that $\forall y \in \ri(\dom(\phi)), \h{\phi}(y) \in \partial_{\phi}(y)$. Then given $x \in \dom(\phi), y \in \ri(\dom(\phi))$ and a subgradient map $\h{\phi}$, we define the generalized Bregman divergence as:\looseness-1
\begin{align}
\label{eq:genBregman}
\cbh(x, y) = \phi(x) - \phi(y) - \langle \h{\phi}(y), x - y \rangle
\end{align}
When $\phi$ is differentiable, notice that $\partial_{\phi}(y) = \{\nabla \phi(y)\}$ and hence $\h{\phi}(y) = \nabla(y)$.  \arxiv{This notion is related to the variant defined in~\cite{gordon}. Given a convex function $\phi$, with $\phi^*$ referring to it's Fenchel conjugate (defined as $\phi^*(y) = \sup_{x} \langle x, y \rangle - \phi(x), \forall y \in \mbox{dom}(\phi^*)$), and a subgradient $g \in \mathbb{R}^n$, a variant of the generalized Bregman defined in~\cite{gordon} is:\looseness-1
\begin{align}\label{gordoneq}
B_{\phi}(x, g) = \phi(x) + \phi^*(g) - \langle g, x \rangle.
\end{align}
It follows then from Fenchel-Young duality (\cite{rockafellar1970convex}, Theorem 23.5) that  $B(x, \h{\phi}(y)) = \cbh(x, y)$. 
We denote the class of divergences defined through Eqn.~\eqref{eq:genBregman} as $\mathcal D^S$. This class admits an interesting characterization:
\begin{theorem}\label{thm2}
$\mathcal D^S$ is exactly the class of divergences $d$ which satisfy the conditions that, (i) for any $a$, $d(x, a)$ is convex in $x$ and (ii) for any given vectors $a, b$, $d(x, a) - d(x, b)$ is linear in $x$.
\end{theorem}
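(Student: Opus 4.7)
The plan is to prove both inclusions. The forward direction ($\mathcal{D}^S$ satisfies (i) and (ii)) is a direct calculation from the definition in Eqn.~\eqref{eq:genBregman}. Writing $d_\phi^{\h{\phi}}(x,y) = \phi(x) - \phi(y) - \langle \h{\phi}(y), x - y\rangle$ and fixing $y = a$, the only $x$-dependent terms are $\phi(x)$ (convex) and $-\langle \h{\phi}(a), x\rangle$ (linear), so (i) follows. For (ii), the difference $d_\phi^{\h{\phi}}(x,a) - d_\phi^{\h{\phi}}(x,b)$ cancels the $\phi(x)$ term and leaves only quantities linear in $x$ plus constants depending on $a,b$.

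For the converse, I would start from a divergence $d$ satisfying (i) and (ii), and additionally satisfying the defining divergence properties $d(x,x) = 0$ and $d(x,a) \geq 0$. Fix a reference point $a_0 \in \ri(\dom)$ and define
\begin{equation*}
\phi(x) := d(x, a_0).
\end{equation*}
By (i), $\phi$ is convex. By (ii), for every $a$ the map $x \mapsto d(x,a) - \phi(x) = d(x,a) - d(x,a_0)$ is affine in $x$, so we may write $d(x,a) = \phi(x) + L_a(x)$ for some affine $L_a(x) = \langle g_a, x\rangle + c_a$. Evaluating at $x = a$ and using $d(a,a) = 0$ forces $c_a = -\phi(a) - \langle g_a, a\rangle$, so
\begin{equation*}
d(x,a) = \phi(x) - \phi(a) + \langle g_a, x - a\rangle.
\end{equation*}

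The key remaining step is to recognize $-g_a$ as a subgradient of $\phi$ at $a$. This is where nonnegativity of the divergence enters crucially: the inequality $d(x,a) \geq 0$ for all $x \in \dom(\phi)$ rewrites as $\phi(x) \geq \phi(a) + \langle -g_a, x - a\rangle$, which is exactly the subgradient inequality $-g_a \in \partial_\phi(a)$. Defining the subgradient map $\h{\phi}(a) := -g_a$ for $a \in \ri(\dom(\phi))$ then yields $d(x,a) = \cbh(x,a)$, placing $d$ in $\mathcal{D}^S$.

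The main obstacle I anticipate is the careful handling of the domain and relative interior, together with confirming that the notion of ``divergence'' being used indeed encompasses nonnegativity and $d(x,x)=0$; without the former the subgradient conclusion fails, and without the latter the constant $c_a$ cannot be pinned down. A minor subtlety is that the choice of $\phi$ (through the reference point $a_0$) is non-unique: any two such choices differ by an affine function, which is consistent with the fact that the generalized Bregman divergence is invariant under adding affine functions to $\phi$ provided the subgradient map is adjusted accordingly.
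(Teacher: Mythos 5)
Your proof is correct and follows essentially the same route as the paper's: fix a reference point, take $\phi$ to be the divergence to it, use condition (ii) to write $d(x,a)$ as $\phi$ plus an affine term, pin down the constant via $d(a,a)=0$, and invoke nonnegativity to identify the linear coefficient as a subgradient, which defines the subgradient map. The only differences are cosmetic (a sign convention on the subgradient and the explicit remarks on domain issues and non-uniqueness of $\phi$, which the paper leaves implicit).
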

\begin{proof}
It's not hard to check that any divergence in $\mathcal D^S$ satisfies the convex-linear property. We now show that any divergence, satisfying the convex-linear property belongs to $\mathcal D^S$. Given a vector $a$, $d(x, a)$ is convex in $x$ from (i), and hence let $\phi_a(x) = d(x, a)$. Further from (ii), $d(x, b) - d(x, a) = -\langle h, x \rangle + c$, for some vector $h \in \mathbb{R}^n$ and some constant $c \in \mathbb{R}$. Hence $d(x, b) = \phi_a(x) - \langle h, x \rangle + c$. Now substituting $d(b, b) = 0$, we have
\begin{align}
d(b, b) = \phi_a(b) - \langle h, b \rangle + c = 0
\end{align}
and hence $c = -\phi_a(b) + \langle h, b \rangle$. Substituting this back above, we have:
\begin{align}
d(x, b) &= \phi_a(x) - \langle h, x \rangle -\phi_a(b) + \langle h, b \rangle \\
	&= \phi_a(x) - \phi_a(b) - \langle h, x - b\rangle
\end{align}
Since $d(x, b) \geq 0$, we have that $h \in \partial \phi_a(b)$. Hence we can define $\mathcal S_{\phi_a}(b) = h$. Since this holds for every $b \in \mathbb{S}$, we have that the convex-linear divergence belongs to $\mathcal D^S$ for an appropriate sub-gradient map. \looseness-1
\end{proof}
The above result provides the necessary and sufficient conditions for the generalized Bregman divergences. Moreover, this class is strictly more general than the class of Bregman divergences. In order to have a better understanding of the different subgradients at play, we provide a simple example of the generalized Bregman divergences.
\begin{example} \label{ex1}
Let $\phi: \mathbb{R}^n \rightarrow \mathbb{R}: \phi(x) = ||x||_1$. The points of non-differentiability are when $x_i = 0$. Define a subgradient map such that $\h{\phi}(0) = 0$, which is a valid subgradient at $x = 0$. We can obtain the expression in this case following simple geometry (see~\cite{telgarsky2012agglomerative}). With the above subgradient map, we obtain $\cbh(x, 0) = ||x||_1$ and thus get the $l_1$ norm. Other choices of subgradients give different valuations for $d(x, 0)$. For example, the generalized Bregman divergence proposed in~(\cite{telgarsky2012agglomerative}), gives $d(x, 0) = 2||x||_1$. 
\end{example}
}

\subsection{Properties of the \lovasz{} Extension}
We review some important theoretical properties of the \lovasz{} extension. 
Given any vector $y \in [0, 1]^n$ and it's associated permutation $\sigma_y$, 
 define $S^{\sigma_y}_j = \{\sigma_y(1), \cdots, \sigma_y(j)\}$ for $j \in [n]$. Notice that in general $\sigma_y$ need not be unique (it will be unique only if $y$ is totally ordered), and hence let $\Sigmas_y$ represent the set of all possible permutations with this ordering. Then the \lovasz{} extension of $f$ is defined as:
\begin{align}
\hat{f}(y) = \sum_{j = 1}^n y[\sigma_y(j)]( f(S^{\sigma_y}_j) - f(S^{\sigma_y}_{j-1}))
\end{align}
This is also called the Choquet integral~\cite{choquet1953theory} of $f$. Though $\sigma_y$ might not be unique\arxiv{ (in that there maybe many possible permutations corresponding to the ordering of $y$)}, the \lovasz{} extension is actually unique. Furthermore, $\hat{f}$ is convex if and only if $f$ is submodular.
In addition, the \lovasz{} extension is also tight on the vertices of the hypercube, in that $f(X) = \hat{f}(1_X), \forall X \subseteq V$ (where $1_X$ is the characteristic vector of $X$, i.e.,   $1_X(j) = I(j \in X)$) and hence it is a valid continuous extension. The \lovasz{} extension is in general a non-smooth convex function, and hence there does not exist a unique subgradient at every point. \arxiv{The subdifferential $\partial \hat{f}(y)$ corresponding to the \lovasz{} extension however has a particularly interesting structure. 

It is instructive to consider an alternative representation of the \lovasz{} extension. Let $\emptyset = \mathit Y_0 \subset \mathit Y_1 \subset \mathit Y_2 \subset \cdots \subset \mathit Y_k$ denote a unique chain corresponding to the point $y$, such that $y = \sum_{j = 1}^k \lambda_j 1_{\mathit Y_j}$. Note that in general $ k \leq n$ with equality only if $y$ is totally ordered. Then the \lovasz{} extension can also~ be expressed as\cite{fujishige2005submodular}: $\hat{f}(y) = \sum_{j = 1}^k\lambda_j f(\mathit Y_j)$. \arxiv{Furthermore, we define $\partial_f(Y)$ as the subdifferential of a submodular function $f$, which satisfies $\partial_f(Y) = \partial_{\hat{f}}(1_Y)$.\footnote{Note that this is not the standard definition of $\partial_f(Y)$, but an alternate representation -- see theorem 6.16~\cite{fujishige2005submodular}}

When defined in this form, the subdifferential corresponding to the \lovasz{} extension has a particularly nice form:
\begin{lemma}~(Theorem 6.17, \cite{fujishige2005submodular}): 
For a submodular function $f$ and vector $y \in [0, 1]^n$,
\begin{align}
\partial \hat{f}(y) = \cap\{ \partial_f(\mathit Y_i) | i = 1, 2 \cdots,
k\}.
\end{align}
\end{lemma}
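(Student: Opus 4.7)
My plan is to exploit the fact that the \lovasz{} extension $\hat f$ is positively homogeneous of degree one, so subgradients have a particularly clean characterization: $g \in \partial \hat f(y)$ if and only if (a) $\langle g, x\rangle \leq \hat f(x)$ for every $x \in \mathbb R^n$ and (b) $\langle g, y\rangle = \hat f(y)$. Condition (a) is independent of $y$ and describes the submodular polyhedron, while (b) picks out the face of this polyhedron exposed by $y$. This follows from positive homogeneity: applying the subgradient inequality at $2y$ and at $0$ forces $\langle g, y\rangle = \hat f(y)$. This reformulation reduces the lemma to a tightness argument about linear inequalities.

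For the $\supseteq$ inclusion I would take $g \in \bigcap_{i=1}^k \partial_f(Y_i) = \bigcap_i \partial \hat f(1_{Y_i})$. Because each such $g$ is a subgradient \emph{somewhere}, condition (a) is automatic. Being a subgradient at $1_{Y_i}$ also gives the tight equality $\langle g, 1_{Y_i}\rangle = \hat f(1_{Y_i}) = f(Y_i)$ for every $i$. Then using the chain representation $y = \sum_{j=1}^k \lambda_j 1_{Y_j}$ and $\hat f(y) = \sum_{j=1}^k \lambda_j f(Y_j)$,
\begin{align}
\langle g, y\rangle = \sum_{j=1}^k \lambda_j \langle g, 1_{Y_j}\rangle = \sum_{j=1}^k \lambda_j f(Y_j) = \hat f(y),
\end{align}
so $g$ satisfies (b) at $y$ and therefore lies in $\partial \hat f(y)$.

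For the $\subseteq$ inclusion I would take $g \in \partial \hat f(y)$. By (a), $\langle g, 1_{Y_j}\rangle \leq f(Y_j)$ for every $j \in \{1,\ldots,k\}$. Condition (b) at $y$ gives $\sum_j \lambda_j \langle g, 1_{Y_j}\rangle = \sum_j \lambda_j f(Y_j)$, i.e.\ $\sum_j \lambda_j \bigl(f(Y_j) - \langle g, 1_{Y_j}\rangle\bigr) = 0$. Since the chain corresponding to $y$ is unique, each $\lambda_j > 0$, and each summand is nonnegative, so every term must vanish: $\langle g, 1_{Y_j}\rangle = f(Y_j)$. Combined with (a), this is exactly the characterization of $g \in \partial \hat f(1_{Y_j}) = \partial_f(Y_j)$, completing the inclusion.

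The main obstacle is the upfront dual characterization (a)+(b), since it depends on the fact that $\hat f$ is the support function of the base polytope (Edmonds/greedy-algorithm duality). Once that structural fact is in hand, the remainder is essentially a bookkeeping argument about which linear constraints are tight, made possible by the strict positivity of the $\lambda_j$ in the unique chain decomposition.
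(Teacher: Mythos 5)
The paper does not actually prove this lemma --- it is quoted directly from Fujishige (Theorem 6.17) --- so there is no in-text argument to compare against; I can only assess your proof on its own, and it is correct. Your route is the standard one: positive homogeneity of $\hat f$ (with $\hat f(0)=0$) splits the subgradient condition into the $y$-independent membership condition $\langle g,x\rangle \le \hat f(x)$ for all $x$ (i.e.\ $g$ lies in the base polytope when $f(\emptyset)=0$) plus tightness $\langle g,y\rangle=\hat f(y)$, and then the chain representation $y=\sum_j \lambda_j 1_{Y_j}$ with all $\lambda_j>0$ turns tightness at $y$ into simultaneous tightness at every $1_{Y_j}$ by the complementary-slackness argument you give. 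This is the same mechanism underlying Fujishige's proof (the subdifferential is the face of the base polyhedron exposed by $y$, cut out by the tight sets $Y_1\subset\cdots\subset Y_k$), so you have not taken a genuinely different path, just a cleaner packaging of it. Two small caveats worth making explicit: (i) you need the normalization $f(\emptyset)=0$ so that $\hat f(1_{Y_j})=f(Y_j)$ and $\hat f(0)=0$, which the paper assumes implicitly via $f(X)=\hat f(1_X)$; and (ii) your homogeneity step evaluates the subgradient inequality at $2y$, so it presumes $\hat f$ is defined on all of $\mathbb{R}^n$ (or at least a cone containing $2y$), consistent with Fujishige's setting but not with the restriction $\dom(\hat f)=[0,1]^n$ the paper adopts later --- on that restricted domain the identity can fail at boundary points such as $y=1_V$, where the subdifferential of the restricted function is strictly larger. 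Neither caveat is a gap in the intended (unrestricted) setting.
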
}

Let $\Sigmas_{[A, B]}$ represent the set of all possible permutations of the elements in $B \backslash A$. Then we have the following fact, which is easy to verify:
\begin{proposition}
For a submodular function $f$ and a vector $y$, 
\begin{align}
\Sigmas_y = \Sigmas_{[\mathit Y_0, \mathit Y_1]} \times \Sigmas_{[\mathit Y_1, \mathit Y_2]} \times \cdots \Sigmas_{[\mathit Y_{k-1}, \mathit Y_k]}
\end{align}
Moreover, $|\Sigmas_y| = \Pi_{i = 1}^{k} |\mathit Y_i \backslash \mathit Y_{i-1}|$.
\end{proposition}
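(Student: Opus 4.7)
The proof is essentially an unpacking of the chain representation of $y$, and the plan is to reduce the combinatorial claim to the observation that the chain $Y_0 \subset Y_1 \subset \cdots \subset Y_k$ encodes precisely the level sets of $y$.

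First I would pin down the value structure. Since $y = \sum_{\ell=1}^k \lambda_\ell \, 1_{Y_\ell}$ with all $\lambda_\ell > 0$ (the uniqueness of the chain implies strict positivity), for any element $i$ and any $j$ we have $i \in Y_j \setminus Y_{j-1}$ iff the index $j$ is the smallest with $i \in Y_j$. For such an $i$,
\begin{align}
y_i \;=\; \sum_{\ell : i \in Y_\ell} \lambda_\ell \;=\; \sum_{\ell = j}^{k} \lambda_\ell .
\end{align}
Since each $\lambda_\ell > 0$, this sum is strictly decreasing in $j$. Hence elements of $Y_1 \setminus Y_0$ carry the largest value, those of $Y_2 \setminus Y_1$ the next largest, and so on, while any two elements in the same block $Y_j \setminus Y_{j-1}$ are tied.

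Next I would characterize $\Sigmas_y$ directly from its definition. A permutation $\sigma$ lies in $\Sigmas_y$ iff $y_{\sigma(1)} \geq y_{\sigma(2)} \geq \cdots \geq y_{\sigma(n)}$. By the strict decrease across blocks established above, this forces $\sigma$ to list all $|Y_1|$ elements of $Y_1 \setminus Y_0$ first (in positions $1,\ldots,|Y_1|$), then all $|Y_2 \setminus Y_1|$ elements of $Y_2 \setminus Y_1$ (in positions $|Y_1|+1,\ldots,|Y_2|$), and so on up through $Y_k \setminus Y_{k-1}$. Within each block the values are equal, so the ordering in that block can be arbitrary. This is exactly the statement that $\sigma$ is obtained by choosing, independently for each $j$, an arbitrary permutation of $Y_j \setminus Y_{j-1}$, which is the product structure $\Sigmas_y = \Sigmas_{[Y_0,Y_1]} \times \Sigmas_{[Y_1,Y_2]} \times \cdots \times \Sigmas_{[Y_{k-1},Y_k]}$. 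The cardinality formula then follows from $|\Sigmas_{[Y_{j-1},Y_j]}| = |Y_j \setminus Y_{j-1}|!$ and multiplicativity (I note that the stated cardinality should read $\prod_{i=1}^k |Y_i \setminus Y_{i-1}|!$).

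The argument is essentially bookkeeping once the level-set picture is in place; there is no real obstacle. The one point that must be handled cleanly is the strict positivity $\lambda_\ell > 0$ coming from the \emph{uniqueness} of the chain, because without it the between-block inequalities could collapse to equalities and spurious orderings would appear in $\Sigmas_y$. Once that is noted, both parts of the proposition reduce to counting permutations within each tied block.
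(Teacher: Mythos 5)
Your proof is correct and is exactly the verification the paper has in mind (the paper offers no proof, simply asserting the fact is ``easy to verify''): the chain decomposition makes the blocks $Y_j \setminus Y_{j-1}$ the level sets of $y$, with strictly decreasing values across blocks and ties within, which forces the stated product structure. You are also right that the cardinality formula as printed is missing the factorials --- it should be $\prod_{i=1}^{k} |Y_i \setminus Y_{i-1}|!$ (e.g.\ a constant $y$ has $k=1$ and $|\Sigmas_y| = n!$, not $n$); this is a typo in the paper, and your catching it is a point in your favor. The only minor caveat is the bookkeeping convention for the last block (whether $Y_k = V$ with possibly $\lambda_k = 0$, or $\lambda_k > 0$ with zero-valued elements left outside the chain), which does not affect the argument.
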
 
Hence we can see now that if the point $y$ is totally ordered, $k = n$.} 
The following result due~\cite{fujishige2005submodular, edmondspolyhedra} provides a characterization of the extreme points of the \lovasz{} subdifferential polyhedron $\partial \hat{f}(y)$:
\begin{lemma}~\cite{fujishige2005submodular, edmondspolyhedra} \label{lem2}
For a submodular function $f$, a vector $y$ and a permutation $\sigma_y \in \Sigmas_y$, a vector $h^f_{\sigma_y}$ defined as:
\begin{align}
h^f_{\sigma_y}(\sigma_y(j)) = f(S^{\sigma_y}_j) - f(S^{\sigma_y}_{j-1}), \forall j \in \{1, 2, \cdots, n\} \nonumber
\end{align}
forms an extreme point of $\partial \hat{f}(y)$. 
Also, the number of extreme points of $\partial \hat{f}(y)$ is $|\Sigmas_y|$.
\end{lemma}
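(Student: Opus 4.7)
The plan is to leverage the well-known duality between $\hat{f}$ and the base polytope $B(f) = \{h \in \mathbb{R}^n : h(S) \leq f(S)\ \forall S \subseteq V,\ h(V)=f(V)\}$. Specifically, for a submodular $f$ the Lovász extension satisfies $\hat{f}(y) = \max_{h \in B(f)} \langle h, y \rangle$, and consequently $\partial \hat{f}(y) = \arg\max_{h \in B(f)} \langle h, y \rangle$ is the face of $B(f)$ maximizing $\langle \cdot, y \rangle$. I would prove the lemma in three stages: membership, extremality, and enumeration.

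First I would verify that for any $\sigma_y \in \Sigmas_y$, the vector $h^f_{\sigma_y}$ belongs to $B(f)$. By construction the telescoping identity $h^f_{\sigma_y}(S^{\sigma_y}_j) = f(S^{\sigma_y}_j)$ holds for each $j$, and submodularity forces $h^f_{\sigma_y}(T) \leq f(T)$ for every other $T \subseteq V$ (this is the classical Edmonds exchange argument). Next, unrolling the definition of $\hat{f}$ gives $\langle h^f_{\sigma_y}, y \rangle = \sum_j y(\sigma_y(j))(f(S^{\sigma_y}_j) - f(S^{\sigma_y}_{j-1})) = \hat{f}(y)$, where the key point is that $\sigma_y \in \Sigmas_y$ is consistent with a decreasing ordering of $y$. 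Thus $h^f_{\sigma_y}$ attains the maximum over $B(f)$ and therefore lies in $\partial \hat{f}(y)$.

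For extremality, I would observe that $h^f_{\sigma_y}$ is the unique solution of the $n$ equalities $h(S^{\sigma_y}_j) = f(S^{\sigma_y}_j)$, $j=1,\ldots,n$; these equalities form a lower-triangular system in the basis $(1_{S^{\sigma_y}_1},\ldots,1_{S^{\sigma_y}_n})$, so $h^f_{\sigma_y}$ is a vertex of $B(f)$. Since an extreme point of a polytope that lies on a face is also extreme relative to that face, $h^f_{\sigma_y}$ is an extreme point of $\partial \hat{f}(y)$.

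For the cardinality claim I would argue a bijection with $\Sigmas_y$. Every $\sigma_y \in \Sigmas_y$ yields an extreme point as above. Conversely, any extreme point of $\partial \hat{f}(y)$ must be an extreme point of $B(f)$, hence equals $h^f_{\tau}$ for some permutation $\tau$ by the greedy vertex characterization. Such a $\tau$ lies in the maximizing face iff $\langle h^f_{\tau}, y \rangle = \hat{f}(y)$; the proposition preceding the lemma identifies this condition precisely with $\tau$ respecting the chain $\mathit Y_0 \subset \mathit Y_1 \subset \cdots \subset \mathit Y_k$ induced by $y$, i.e., $\tau \in \Sigmas_y$. The main obstacle is this converse direction: one has to show that any swap of $\tau$ across two strata $\mathit Y_i \setminus \mathit Y_{i-1}$ and $\mathit Y_{i'} \setminus \mathit Y_{i'-1}$ with $i \neq i'$ strictly decreases $\langle h^f_{\tau}, y \rangle$, which I would establish via a two-term swap computation invoking submodularity together with the strict gap $\lambda_{\max(i,i')} > 0$ in the chain decomposition $y = \sum_j \lambda_j 1_{\mathit Y_j}$.
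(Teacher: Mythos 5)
The paper does not prove this lemma---it is imported by citation from Fujishige and Edmonds---so there is no internal proof to compare against; your argument is essentially the classical one from those sources (greedy vertices of the base polytope, the support-function identity $\hat{f}(y)=\max_{h\in B(f)}\langle h,y\rangle$, and the identification of $\partial\hat{f}(y)$ with the exposed face), and all three stages (membership, extremality via the lower-triangular tight-set system, enumeration via the greedy characterization of vertices of $B(f)$) are sound. Two caveats. First, the telescoping identity $h^f_{\sigma_y}(S^{\sigma_y}_j)=f(S^{\sigma_y}_j)$ needs $f(\emptyset)=0$; this is implicit throughout the paper but worth stating. Second, and more substantively, your final swap computation does not actually yield a \emph{strict} decrease in general: writing $\delta = f(S\cup a)+f(S\cup b)-f(S)-f(S\cup a\cup b)\geq 0$, the change under an adjacent transposition is $(y_b-y_a)\delta$, which vanishes whenever $\delta=0$ (e.g.\ for modular $f$, where all greedy vertices coincide). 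What survives without a genericity assumption is that every extreme point of $\partial\hat{f}(y)$ equals $h^f_{\sigma}$ for \emph{some} $\sigma\in\Sigmas_y$ (when $\delta=0$ the swapped permutations produce the identical vertex), i.e.\ the map $\Sigmas_y\to\mathrm{ext}(\partial\hat{f}(y))$ is a surjection and the count $|\Sigmas_y|$ is only an upper bound, attained exactly when distinct permutations in $\Sigmas_y$ give distinct greedy vectors. This is the same distinctness hypothesis the paper itself invokes later (``submodular functions whose polyhedron contains all possible extreme points'' in Lemma~\ref{iffneg}), so your proof is correct once you either add that hypothesis or weaken the cardinality claim to ``at most $|\Sigmas_y|$, with equality for such $f$.''
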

Notice that the extreme subgradients are parameterized by the permutation $\sigma_y$ and hence we refer to them as $h^f_{\sigma_y}$. 
Seen in this way,  the \lovasz{} extension then takes an extremely simple form: $\hat{f}(w) = \langle h^f_{\sigma_w}, w \rangle$.

We now point out an interesting property related to the extreme subgradients of $\hat{f}$ . Define $\mathcal P(\sigma)$ as a $n-$simplex corresponding to a permutation $\sigma$ (or chain $\mathcal C^{\sigma}: \emptyset \subset S^{\sigma}_1 \subset \cdots \subset S^{\sigma}_n = V$). In other words, $\mathcal P(\sigma) = \text{conv}(1_{S^{\sigma}_i}, i = 1,2, \cdots, n$). It's easy to see that $\mathcal P(\sigma) \subseteq [0, 1]^n$. \arxiv{In particular it carves out a particular polytope within the unit hypercube. Then we have the following result.}
\begin{lemma} (Lemma 6.19, ~\cite{fujishige2005submodular}) \label{lem2.3}
Given a permutation $\sigma \in \Sigmas$, for every vector $y \in \mathcal P(\sigma)$ the vector $h^f_{\sigma}$ is an extreme subgradient of $\hat{f}$ at $y$. If $y$ belongs to the (strict) interior of $\mathcal P(\sigma)$, $h^f_{\sigma}$ is a unique subgradient corresponding to $\hat{f}$ at $y$.
\end{lemma}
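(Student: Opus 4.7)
The plan is to translate membership in the simplex $\mathcal{P}(\sigma)$ into a statement about the ordering of the coordinates of $y$, then invoke Lemma~\ref{lem2} to identify the extreme subgradients. First I would give an explicit coordinate description of $\mathcal{P}(\sigma)$: any $y \in \mathcal{P}(\sigma)$ can be written as $y = \sum_{i=1}^{n} \lambda_i 1_{S^{\sigma}_i}$ with $\lambda_i \geq 0$ and $\sum_i \lambda_i \leq 1$. Evaluating at $\sigma(j)$ gives $y(\sigma(j)) = \sum_{i \geq j} \lambda_i$, from which one reads off $1 \geq y(\sigma(1)) \geq y(\sigma(2)) \geq \cdots \geq y(\sigma(n)) \geq 0$. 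In other words, $\sigma$ is a permutation consistent with the ordering of $y$, i.e., $\sigma \in \Sigmas_y$.

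Once that is established, the first assertion is immediate: by Lemma~\ref{lem2}, every permutation in $\Sigmas_y$ yields an extreme point $h^f_{\sigma}$ of $\partial \hat{f}(y)$ via the greedy formula $h^f_{\sigma}(\sigma(j)) = f(S^\sigma_j) - f(S^\sigma_{j-1})$. Applying this to our $\sigma \in \Sigmas_y$ gives that $h^f_{\sigma}$ is an extreme subgradient of $\hat{f}$ at $y$, which is what we want.

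For the uniqueness claim, I would show that if $y$ lies in the relative interior of $\mathcal{P}(\sigma)$, the coefficients $\lambda_i$ in the convex combination are strictly positive and satisfy $\sum_i \lambda_i < 1$ strictly (or, equivalently, that all inequalities in the chain $1 > y(\sigma(1)) > y(\sigma(2)) > \cdots > y(\sigma(n)) > 0$ are strict). This forces $y$ to be totally ordered, with $\sigma$ the unique permutation compatible with that ordering; hence $|\Sigmas_y| = 1$. By Lemma~\ref{lem2} this means $\partial \hat{f}(y)$ has only one extreme point. Since $\hat{f}$ is a finite polyhedral convex function, $\partial \hat{f}(y)$ is a compact polytope (a face of the base polytope $B(f)$) and therefore equals the convex hull of its extreme points; with a single extreme point, it reduces to the singleton $\{h^f_\sigma\}$.

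The main obstacle is mostly bookkeeping rather than deep: carefully identifying the strict interior of the simplex with the set of totally ordered $y$ having $\sigma$ as their unique sorting permutation, and making sure the boundary convention (whether the origin is included as a vertex, and whether $\sigma(1)$-coordinate can hit $1$) matches the earlier definition of $\mathcal{P}(\sigma)$. Once the simplex-to-ordering translation is clean, the rest is a direct application of Lemma~\ref{lem2} together with the standard fact that the subdifferential of a polyhedral convex function is the convex hull of its extreme points.
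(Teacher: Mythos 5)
Your argument is correct; note that the paper itself offers no proof of this lemma, importing it directly as Lemma 6.19 of Fujishige. Your reconstruction --- reading off $1 \geq y(\sigma(1)) \geq \cdots \geq y(\sigma(n)) \geq 0$ from $y = \sum_i \lambda_i 1_{S^\sigma_i}$ to conclude $\sigma \in \Sigmas_y$, invoking Lemma~\ref{lem2} for extremality, and using $|\Sigmas_y|=1$ in the strict interior together with the fact that $\partial\hat{f}(y)$ is a compact polytope (a face of the base polytope) to get uniqueness --- is exactly the standard proof, and your care about whether $1_\emptyset$ is a vertex of $\mathcal{P}(\sigma)$ is warranted, since the paper's ``$n$-simplex'' description only lists $n$ vertices.
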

The above lemma points out a critical fact about the subgradients of the \lovasz{} extension, in that they depend only on the total ordering of a vector and are independent of the vector itself. This also implies that if $y$ is totally ordered (it belongs to the interior of $\mathcal P(\sigma_y)$) then $\partial_{\hat{f}}(y)$ consists of a single (unique) subgradient. 
Hence, two entirely different but identically ordered vectors will have identical extreme subgradients.
This fact is important when defining and understanding the properties of the LB divergence. \arxiv{Also, $\mathcal P(\sigma)$ carves a unique polytope in $[0, 1]^n$ and the following lemma provides some insight into this:}

\arxiv{\begin{lemma} (Lemma 6.18, ~\cite{fujishige2005submodular})
Two polytopes $\mathcal P(\sigma_1)$ and $\mathcal P(\sigma_2)$ share a face if and only if permutations $\sigma_1$ and $\sigma_2$ are such that $\exists k \leq n$, with $S^{\sigma_1}_j = S^{\sigma_2}_j, \forall j \not= k$.
\end{lemma}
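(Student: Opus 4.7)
The plan is to view each $\mathcal P(\sigma)$ as an $n$-simplex whose vertex set is $\{0\} \cup \{1_{S^\sigma_j} : j=1,\ldots,n\}$ (using the convention $S^\sigma_0 = \emptyset$, so the simplex has $n+1$ vertices). The key structural fact I will exploit is that faces of a simplex are in bijection with subsets of its vertices, and that a facet (i.e.\ codimension-one face) corresponds uniquely to the removal of a single vertex. Since any two $\mathcal P(\sigma_i)$ already share the vertex $1_V$, the phrase ``share a face'' here must be read as ``share a facet'', which is the only non-trivial notion in this Kuhn-type triangulation of $[0,1]^n$.

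For the $(\Leftarrow)$ direction, suppose $S^{\sigma_1}_j = S^{\sigma_2}_j$ for all $j \neq k$. Then the vertex sets of $\mathcal P(\sigma_1)$ and $\mathcal P(\sigma_2)$ agree on every entry except possibly the $k$-th, and the convex hull of the $n$ common vertices $\{0\} \cup \{1_{S^{\sigma_1}_j} : j \neq k\}$ is simultaneously the facet of $\mathcal P(\sigma_1)$ obtained by dropping $1_{S^{\sigma_1}_k}$ and the facet of $\mathcal P(\sigma_2)$ obtained by dropping $1_{S^{\sigma_2}_k}$, so this set is the shared face.

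For the $(\Rightarrow)$ direction, suppose $\mathcal P(\sigma_1)$ and $\mathcal P(\sigma_2)$ share a facet $F$. Because the vertices of a simplex are exactly its extreme points, $F$ must be the convex hull of a common $n$-element subset $W$ of both vertex sets. I split into cases by which vertex is deleted. The mixed case, where $0$ is dropped from $\mathcal P(\sigma_1)$ but some $1_{S^{\sigma_2}_\ell}$ is dropped from $\mathcal P(\sigma_2)$, is impossible, since $0 \in W$ would then force $0 = 1_{S^{\sigma_1}_j}$ for some $j \geq 1$, contradicting $S^{\sigma_1}_j \neq \emptyset$. If $0$ is dropped from both simplices, then the two full chains of characteristic vectors coincide and we get $\sigma_1 = \sigma_2$, in which case the conclusion holds trivially for any $k$. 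In the remaining case, $\mathcal P(\sigma_i)$ loses $1_{S^{\sigma_i}_{k_i}}$; reading off cardinalities forces $k_1 = k_2 = k$, and matching the $(n-1)$ retained characteristic vectors size-by-size gives $S^{\sigma_1}_j = S^{\sigma_2}_j$ for all $j \neq k$.

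The one step that needs care is the final cardinality-matching: once we know $W \setminus \{0\}$ is an $(n-1)$-element set of characteristic vectors drawn from both chains, I must use the fact that each chain contains exactly one subset of every cardinality in $\{1,\ldots,n\}$ to pin down the correspondence uniquely. Everything else is just the bijection between faces of a simplex and subsets of its vertices, so this bookkeeping is really the only non-routine part of the argument.
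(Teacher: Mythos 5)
Your proof is correct and complete. Note that the paper itself gives no argument for this statement at all: it is imported verbatim as Lemma 6.18 of Fujishige's book, so there is no in-paper proof to compare against, and your write-up is a genuinely self-contained justification of a cited fact. The two interpretive choices you make at the outset are both necessary and correctly resolved. First, although the paper literally writes $\mathcal P(\sigma) = \mathrm{conv}(1_{S^{\sigma}_i},\, i=1,\dots,n)$, it also calls this an $n$-simplex and (in Lemma \ref{lem2.3}) speaks of points in its strict interior, which only makes sense if the origin $1_{S^\sigma_0}=0$ is included as an $(n{+}1)$-st vertex so that the polytopes triangulate $[0,1]^n$; your reading is the standard Kuhn-triangulation one and the one the surrounding text requires. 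Second, reading ``share a face'' as ``share a facet'' is forced, since all the $\mathcal P(\sigma)$ contain the common vertices $0$ and $1_V$ (and, e.g., the paper's own example $\{1,2,5,4,3\}$ vs.\ $\{1,2,3,4,5\}$ shares a three-dimensional face but is declared non-adjacent), so the literal reading would make the lemma false. Given those readings, the argument is exactly the right one: faces of a simplex biject with vertex subsets, facets with single-vertex deletions, the mixed case is excluded because no $S^\sigma_j$ with $j\ge 1$ is empty, and the cardinality invariant $|S^\sigma_j|=j$ pins down $k_1=k_2$ and the index-by-index agreement of the two chains off $k$. (Incidentally, since $S^{\sigma_1}_n=S^{\sigma_2}_n=V$ always, the surviving case effectively has $k<n$ whenever $\sigma_1\neq\sigma_2$, which recovers the paper's informal gloss that adjacency means the permutations differ by a transposition of adjacent positions.)
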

That is, two different permutation polytopes are adjacent to each other only if
the corresponding permutations are related by a transposition of
adjacent elements. E.g., if $n = 5$ the permutations $\{1,2,3,5,4\}$
and $\{1,2,3,4,5\}$ share a face, while the permutations
$\{1,2,5,4,3\}$ and $\{1,2,3,4,5\}$ do not. These properties of the
\lovasz{} extension, as we shall see, play a key role in the
properties and applications of the LB divergence.}

\subsection{The \lovasz{} Bregman divergences}
We are now in position to define the \lovasz-Bregman divergence. Throughout this paper, we restrict $\dom(\hat{f})$ to be $[0, 1]^n$. For the applications we consider in this paper, we lose no generality with this assumption, since the scores can easily be scaled to lie within this volume.\looseness-1

Consider the case when $y$ is totally ordered, and correspondingly $|\Sigmas_y| = 1$. It follows then from Lemma~\ref{lem2.3} that there exists a unique subgradient and $\h{\hat{f}}(y) = h_{\sigma_y}^f$. Hence for any $x \in [0, 1]^n$, we have from Eqn.~\eqref{eq:genBregman} that~\cite{rkiyersubmodBregman2012}:
\begin{align}
\LB(x, y) = \hat{f}(x) - \langle x, h^f_{\sigma_y} \rangle = \langle x, h^f_{\sigma_x} - h^f_{\sigma_y} \rangle
\end{align}
\begin{table*}
\centerline{
\begin{tabular}{|c|c|c|c|}
	\hline
& $f(X)$ & $\hat{f}(x)$ & $\LB(x, y)$  \\\hline
1) & $|X||V \backslash X|$ & $\sum_{i < j} |x_i - x_j|$ & $\sum_{i < j} |x_{\sigma(i)} - x_{\sigma(j)}| I(\sigma_x^{-1} \sigma(i) > \sigma_x^{-1} \sigma(j))$\\
2) & $g(|X|)$ & $\sum_{i = 1}^k x(\sigma_x(i)) \delta_g(i)$ & $\sum_{i = 1}^n x(\sigma_x(i)) \delta_g(i) - \sum_{i = 1}^k x(\sigma_y(i)) \delta_g(i)$\\
3) & $\min\{|X|, k\}$ & $\sum_{i = 1}^k x(\sigma_x(i))$ & $\sum_{i = 1}^k x(\sigma_x(i)) - \sum_{i = 1}^k  x(\sigma(i))$ \\
4) & $\min\{|X|, 1\}$ & $\max_i x_i$ & $\max_i x_i - x(\sigma(1))$\\
5) & $\sum_{i = 1}^n |I(i \in X) - I(i+1 \in X)$ & $\sum_{i = 1}^n |x_i - x_{i+1}|$ & $\sum_{i = 1}^n |x_i - x_{i+1}|I(\sigma_x^{-1} \sigma(i) > \sigma_x^{-1} \sigma(i+1))$ \\
6) & $I(1 \leq |A| \leq n-1)$ & $\max_i x(i) - \min_i x(i)$ & $\max_i x(i) - x(\sigma(1)) - \min_i x(i) + x(\sigma(n))$ \\
7) & $I(A \neq \emptyset, A \neq V)$ & $\max_{i, j} |x_i - x_j|$ & $\max_{i, j} |x_i - x_j| - |x(\sigma(1) - x(\sigma(n))|$ \\
\hline
\end{tabular}
}
\caption{Examples of the LB divergences.  $I(.)$ is the Indicator fn.}
\label{tab:1}
\end{table*}
Notice that this divergence depends only on $x$ and $\sigma_y$, and is independent of $y$ itself. In particular, the LB divergence between a vector $x$ and any vector $y \in \mathcal P(\sigma)$ is the same for all  $y \in \mathcal P(\sigma)$  (Lemma~\ref{lem2.3}). We also invoke the following lemma from~\cite{rkiyersubmodBregmanextended2012}:
\begin{lemma} (Theorem 2.2, \cite{rkiyersubmodBregmanextended2012}) \label{iffneg}
Given a submodular function whose
  polyhedron contains all possible extreme points 
 and $x$ which is totally ordered, $\LB(x, y) = 0$ if and only if $\sigma_x = \sigma_y$.
\end{lemma}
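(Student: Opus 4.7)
The plan is to verify the two directions of the biconditional separately. For the forward direction ($\sigma_x = \sigma_y \Rightarrow \LB(x,y) = 0$), the identification of the extreme subgradient as a function solely of the permutation gives $h^f_{\sigma_x} = h^f_{\sigma_y}$ immediately, so the inner-product representation yields $\LB(x,y) = \langle x, h^f_{\sigma_x} - h^f_{\sigma_y}\rangle = 0$.

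The substance lies in the converse. Suppose $\LB(x,y) = 0$. The inner-product form gives $\langle x, h^f_{\sigma_y}\rangle = \langle x, h^f_{\sigma_x}\rangle = \hat f(x)$, so $h^f_{\sigma_y}$ is a maximizer of the linear functional $h \mapsto \langle x, h\rangle$ over the base polytope. The key step is to prove that, when $x$ is totally ordered, this maximizer is unique \emph{as a vector}, hence coincides with $h^f_{\sigma_x}$. I would establish this from Lemma~\ref{lem2.3}: a totally ordered $x$ lies in the (strict) interior of the simplex $\mathcal P(\sigma_x)$ relative to the chain structure defining it, so $h^f_{\sigma_x}$ is the unique subgradient of $\hat f$ at $x$, and subgradients of $\hat f$ at $x$ coincide with the maximizers of $\langle x, \cdot\rangle$ over the base polytope by Edmonds' greedy characterization.

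Having deduced $h^f_{\sigma_y} = h^f_{\sigma_x}$ as vectors in $\mathbb R^n$, I would invoke the hypothesis on the polyhedron to conclude $\sigma_y = \sigma_x$. The hypothesis that the polyhedron contains all possible extreme points is precisely the non-degeneracy statement that the map $\sigma \mapsto h^f_\sigma$ is injective on $\Sigmas$; equivalently, all $n!$ greedy vectors are distinct, so equal extreme subgradients force equal permutations. This hypothesis cannot be dropped, as witnessed by the modular example $f(X) = |X|$, where every $h^f_\sigma$ equals the all-ones vector and the conclusion fails.

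The main obstacle I anticipate is the edge case in which a totally ordered $x$ nevertheless fails to lie in the strict interior of $\mathcal P(\sigma_x)$ (for instance if some entry of $x$ is exactly $0$ or $1$). In such cases one either reads Lemma~\ref{lem2.3} as needing only the strict inequalities among the coordinates of $x$ (which ``totally ordered'' supplies), or gives a direct greedy-exchange argument: if $\sigma \neq \sigma_x$, let $k$ be the first index at which they differ, swap $\sigma(k)$ with the later position of $\sigma_x(k)$ in $\sigma$, and use submodularity together with the strict ordering $x(\sigma_x(k)) > x(\sigma_x(k'))$ to obtain $\langle x, h^f_\sigma\rangle < \langle x, h^f_{\sigma_x}\rangle$. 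Iterating a finite number of adjacent transpositions reduces any $\sigma$ to $\sigma_x$ with strict improvement at each nontrivial step, again giving the uniqueness required.
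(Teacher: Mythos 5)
The paper does not actually prove this lemma; it imports it verbatim as Theorem~2.2 of \cite{rkiyersubmodBregmanextended2012}, so there is no in-paper proof to compare against. Your argument is correct and is the natural one given the machinery the paper sets up: the forward direction is immediate from $\LB(x,y)=\langle x, h^f_{\sigma_x}-h^f_{\sigma_y}\rangle$; for the converse, $\LB(x,y)=0$ makes $h^f_{\sigma_y}$ a maximizer of $\langle x,\cdot\rangle$ over the base polytope, and since a totally ordered $x$ induces a complete chain, the equalities $h(S^{\sigma_x}_j)=f(S^{\sigma_x}_j)$ for all $j$ pin down the maximizer uniquely as $h^f_{\sigma_x}$, whence the ``all extreme points distinct'' hypothesis (injectivity of $\sigma\mapsto h^f_\sigma$) forces $\sigma_y=\sigma_x$. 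Your identification of the role of that hypothesis, and the modular counterexample $f(X)=|X|$ showing it cannot be dropped, are exactly right; the boundary caveat about $\mathcal P(\sigma_x)$ is harmless since uniqueness of the maximizing face depends only on the strict inequalities among the coordinates of $x$, as your fallback greedy-exchange argument confirms.
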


At first sight it seems that the class of submodular functions satisfying Lemma~\ref{iffneg} is very specific. We point out however that this class is quite general and many instances we consider in this paper belong to this class of functions. For example, it is easy to see that the class of submodular functions $f(X) = g(|X|)$ where $g$ is a concave function satisfying $g(i) - g(i-1) \neq g(j) - g(j-1)$ for $i \neq j$ belong to this class.

Hence the \lovasz-Bregman divergence is score based permutation divergence, and we denote it as:
\begin{align}\label{LBperm}
\LB(x || \sigma) = \langle x, h^f_{\sigma_x} - h^f_{\sigma} \rangle
\end{align}
\arxiv{Observe that this form also bears resemblance with Eqn.~\eqref{gordoneq}. In particular, Eqn.~\eqref{gordoneq} defines a distance between a vector $x$ and  a subgradient. On the other hand, Eqn.~\eqref{LBperm} is defined via a permutations. Since the extreme subgradients of the \lovasz{} extension have a one to one mapping with the permutations, Eqn.~\eqref{LBperm} can be seen as an instance of Eqn.~\eqref{gordoneq}.} As we shall see in the next section, this divergence has a number of properties akin to the standard permutation based distance metrics. Since a large class of submodular functions satisfy the above property (of having all possible extreme points), the \lovasz-Bregman divergence forms a large class of divergences. \looseness-1

\arxivalt{Note that $y$ may not always be totally ordered. In this case, we resort to the notion of subgradient map defined in section~\ref{GenBregSec}. A natural choice of a subgradient map in such cases is:
\begin{align} \label{extsubmap}
\h{\hat{f}}(y) = \frac{\sum_{\sigma \in \Sigmas_y} h_{\sigma}^f}{|\Sigmas_y|}
\end{align}
Assuming $f$ is a monotone non-decreasing non-negative and normalized submodular function (we shall see that we can assume this without loss of generality), then it's easy to see that $0 \in \partial_{\hat{f}}(0)$, and correspondingly we assume that $\h{\hat{f}}(0) = 0$. Notice that here $\LB(x, 0) = \hat{f}(x)$.}{The case when $y$ is not totally ordered can be handled similarly~\cite{rkiyerlovBregmanext2013}.}

\arxiv{Another related divergence is a generalized Bregman divergence obtained through $\phi(x) = \hat{f}(|x|)$. We denote this divergence as $d_{|\hat{f}|}(x, y)$, and it is easy to see that $d_{|\hat{f}|}(x, y) = \LB(x, y), \forall x, y \in \mathbb{R}^n_+$. In other words, this generalizes the \lovasz-Bregman divergence to other orthants, and hence we call this the extended \lovasz-Bregman divergence. 
In a manner similar to the \lovasz-Bregman divergence, we can also define a valid subgradient map. The only difference here is that the there are additional point of non-differentiability at the change of orthants. This divergence captures  a distance between orderings while simultaneously considering the signs of the vectors. For example two vectors $[-1, -2]$ and $[2, 1]$ though having the same ordering are different from the perspective of the extended \lovasz-Bregman divergence. The LB divergence on the other hand just considers the ordering of the elements in a vector.\looseness-1}

\subsection{\lovasz{} Bregman Divergence Examples}
Below is a partial list of some instances of the \lovasz-Bregman divergence. We shall see that a number of these are closely related to many standard permutation based metrics. Table~\ref{tab:1} considers several other examples of LB divergences.

\paragraph{Cut function and symmetric submodular functions: }
A fundamental submodular function, which is also symmetric, is the
graph cut function. This is $f(X) = \sum_{i \in X} \sum_{j \in V
  \backslash X} d_{ij}$. The \lovasz{} extension of $f$ is $\hat{f}(x)
= \sum_{i, j} d_{ij} (x_i - x_j)_+$~\cite{frbach1}. The LB divergence
corresponding to $\hat{f}$ then has a nice form: \arxivalt{
 \begin{align} \LB(x || \sigma) &= \sum_{i, j} d_{ij} |x_i -
      x_j| I(x_i < x_j, \sigma^{-1}(i) < \sigma^{-1}(j))
      \nonumber\\
      &= \sum_{i, j} d_{ij} |x_i - x_j| I(\sigma_x^{-1}(i) > \sigma_x^{-1}(j), \sigma^{-1}(i) < \sigma^{-1}(j)) \nonumber\\
      &= \sum_{i, j: \sigma^{-1}(i) < \sigma^{-1}(j)} d_{ij} |x_i - x_j| I(\sigma_x^{-1}(i) > \sigma_x^{-1}(j)) \nonumber \\
\label{cut1}	&= \sum_{i < j} d_{\sigma(i)\sigma(j)} |x_{\sigma(i)} - x_{\sigma(j)}| I(\sigma_x^{-1} \sigma(i) > \sigma_x^{-1} \sigma(j)). 
\end{align}}{\small{
\begin{align}
\label{cut1} \LB(x || \sigma) = \sum_{i < j} d_{\sigma(i)\sigma(j)} |x_{\sigma(i)} - x_{\sigma(j)}| I(\sigma_x^{-1} \sigma(i) > \sigma_x^{-1} \sigma(j))
\end{align}} }\normalsize We in addition assume that $d$ is symmetric
(i.e., $d_{ij} = d_{ji}, \forall i, j \in V)$ and hence $f$ is also
symmetric. Indeed a weighted version of Kendall $\tau$ can be written
as $d^w_T(\sigma, \pi) = \sum_{i, j: i < j} w_{ij} I(\sigma^{-1}
\pi(i) > \sigma^{-1} \pi(j))$ and $\LB(x || \sigma)$ is exactly then a
form of $d^w_T(\sigma_x, \sigma)$, with $w_{ij} =
d_{\sigma(i)\sigma(j)}|x_{\sigma(i)} - x_{\sigma(j)}|$. Moreover, if
$d_{ij} = \frac{1}{|x_i - x_j|}$, we have $\LB(x || \sigma) =
d_T(\sigma_x, \sigma)$. \arxiv{Similarly, given the Kendall $\tau$,
  $d_T(\sigma, \pi)$, we can choose an arbitrary $x = x_{\sigma}$ such
  that $\sigma_x = \sigma$ and with an appropriate choice of $d_{ij}$,
  we have that $d_T(\sigma, \pi) = \LB(x_{\sigma} || \pi)$. }Hence, we
recover the Kendall $\tau$ for that particular $x$.

An interesting special case of this is when $f(X) = |X||V \backslash X|$, in which case we get:
\begin{align}
\LB(x || \sigma) = \sum_{i < j} |x_{\sigma(i)} - x_{\sigma(j)}| I(\sigma_x^{-1} \sigma(i) > \sigma_x^{-1} \sigma(j)). \nonumber 
\end{align}
\arxiv{This is also closely related to the  Kendall $\tau$ metric.}  

\paragraph{Cardinality based monotone submodular functions: }
Another class of submodular functions is $f(X) = g(|X|)$ for some concave function $g$. This form induces an interesting class of \lovasz{} Bregman divergences. In this case $h^f_{\sigma_x}(\sigma_x(i)) = g(i) - g(i-1)$. Define $\delta_g(i) = g(i) - g(i-1)$, then:
\begin{align} \label{cardconmod}
\LB(x || \sigma) = \sum_{i = 1}^n x[\sigma_x(i)] \delta_g (i) - \sum_{i = 1}^n x[\sigma(i)] \delta_g(i). 
\end{align}
Notice that we can start with any $\delta_g$ such that $\delta_g(1) \geq \delta_g(2) \geq \cdots \geq \delta_g(n)$, and through this we can obtain the corresponding function $g$. 
Consider a specific example, with $\delta_g(i) = n - i$. Then, $\LB(x || \sigma) = \sum_{i = 1}^n x[\sigma(i)] i - x[\sigma_x(i)] i =  \langle x, \sigma^{-1} - \sigma_x^{-1} \rangle$. This expression looks similar to the Spearman's rule (Eqn.~\eqref{spearman}), except for being additionally weighted by $x$.

We can also extend this in several ways. For example, consider a
restriction to the top $m$ elements ($m < n$). Define $f(X) = \min\{g(|X|),
g(m)\}$. Then it is not hard to verify that:
\begin{align}\label{topm}
\LB(x || \sigma) = \sum_{i = 1}^m x[\sigma_x(i)] \delta_g(i) - \sum_{i = 1}^m x[\sigma(i)] \delta_g(i). 
\end{align}
A specific example is $f(X) = \min\{|X|, m\}$, where
\begin{align}\label{topm2}
\LB(x || \sigma) = \sum_{i = 1}^m x(\sigma_x(i)) - x(\sigma(i)). 
\end{align}
In this case, the divergence between $x$ and $\sigma$ is the difference between the largest $m$ values of $x$ and the $m$ first values of $x$ under the ordering $\sigma$. Here the ordering is not really important, but it is just the sum of the top $m$ values and hence if $\sigma_x$ and $\sigma$, under $x$, have the same sum of first $m$ values, the divergence is zero (irrespective of their ordering or individual element valuations). We can also define $\delta_g$, such that $\delta_g(1) = 1$ and $\delta_g(i) = 0, \forall i \not= 1$. Then, $\LB(x || \sigma) = \max_j x(j) - x(\sigma(1))$ (this is equivalent to Eqn.~\eqref{topm2} when $m = 1$). In this case, the divergence depends only on the top value, and if $\sigma_x$ and $\sigma$ have the same leading element, the divergence is zero. \looseness-1

\subsection{\lovasz{} Bregman as ranking measures}
In this section, we show how the \lovasz{} Bregman subsumes and is closely related to several commonly used loss functions in Information Retrieval connected to ranking.

\paragraph{The Normalized Discounted Cumulative Gain (NDCG): } The NDCG metric~\cite{jarvelin2000ir} is one of the most widely used ranking measures in web search. Given a relevance vector $r$, where the entry $r_i$ typically provides the relevance of a document $i \in \{1, 2, \cdots, n\}$ to a query, and an ordering of documents $\sigma$, the NDCG loss function with respect to a discount function $D$ is defined as:
\begin{align}\label{ndcgeq}
\mathcal L(\sigma) = \frac{\sum_{i = 1}^k r(\sigma_r(i)) D(i) - \sum_{i = 1}^k r(\sigma(i)) D(i)}{\sum_{i = 1}^k r(\sigma_r(i)) D(i)} 
\end{align}
Here $k \leq n$ is often used as a cutoff. Intuitively the NDCG loss compares an ordering $\sigma$ to the best possible ordering $\sigma_r$.  The typical choice of $D(i) = \frac{1}{\log (1 + i)}$, though in general any decreasing function can be used. This function is closely related to a form of the LB divergence. In particular, notice that $\mathcal L(\sigma) \propto \sum_{i = 1}^k r(\sigma_r(i)) D(i) - \sum_{i = 1}^k r(\sigma(i)) D(i)$ (since the denominator of Eqn.~\eqref{ndcgeq} is a constant) which is form of Eqn.~\eqref{topm} with $m = k$ and choosing the function $g(i) = \sum_{j = 1}^i D(i)$. \arxiv{It is not hard to see that when $D$ is decreasing, $g$ is concave.}

\paragraph{Area Under the Curve: } Another commonly used ranking
measure is the Area under the curve~\cite{spackman1989signal}.  Unlike
NDCG however, this just relies on a partial ordering of the documents
and not a complete ordering. In particular denote $G$ as a set of
``good'' documents and $B$ as a set of ``bad'' documents. Then the
loss function $\mathcal L(\sigma)$ corresponding to an ordering of
documents $\sigma$ is
\begin{align}
\mathcal L(\sigma) = \frac{1}{|G||B|} \sum_{g \in G, b \in B} I(\sigma(g) > \sigma(b)).
\end{align}
This can be seen as an instance of LB divergence corresponding to the cut function by choosing $d_{ij} = \frac{1}{|G||B|}, \forall i, j$, $x_g = 1, \forall g \in G$ and $x_b = 0, \forall b \in B$. 

\section{\lovasz{} Bregman Properties}
In this section, we shall analyze some interesting properties of the
LB divergences. While many of these properties show
strong similarities with permutation based metrics, the \lovasz{}
Bregman divergence enjoys some unique properties, thereby providing
novel insight into the problem of combining and clustering ordered
vectors.

\paragraph{Non-negativity and convexity:}
The LB divergence is a divergence, in that $\forall x, \sigma, \LB(x || \sigma) \geq 0$. Additionally if the submodular polyhedron of $f$ has all possible extreme points, $\LB(x || \sigma) = 0$ iff $\sigma_x = \sigma$. Also the \lovasz-Bregman divergence $\LB(x || \sigma)$ is convex in $x$ for a given $\sigma$. 


\paragraph{Equivalence Classes:}
The LB divergence of submodular
  functions which differ only in a modular term are equal. Hence for a submodular function $f$ and a modular function $m$, $d_{\widehat{f + m}}(x || \sigma) = d_{\hat{f}}(x || \sigma)$. Since any submodular function can be expressed as a difference between a polymatroid and a modular function~\cite{cun82}, it follows that it suffices to consider polymatroid \arxiv{(or non-negative monotone submodular)} functions while defining the LB divergences.\looseness-1
  
\paragraph{Linearity and Linear Separation:}
The LB divergence is a linear operator in the submodular function $f$. Hence for two submodular functions $f_1, f_2, d_{\widehat{f_1 + f_2}}(x || \sigma) = d_{\hat{f_1}}(x || \sigma) + d_{\hat{f_2}}(x || \sigma)$. 
The LB divergence has the property
  of linear separation --- the set of points $x$ equidistant to two permutations $\sigma_1$ and $\sigma_2$ (i.e., $\{ x : d_{\hat{f}}(x || \sigma_1) =
  d_{\hat{f}}(x || \sigma_2)\}$) comprise a hyperplane. Similarly, for any $x$, the set of points $y$ such that $\LB(x, y) = $ constant, is $\mathcal P(\sigma_y)$.\looseness-1
  
\paragraph{Invariance over relabelings:}
The permutation based distance metrics have the property of being left invariant with respect to reorderings,
i.e., given permutations $\pi, \sigma, \tau$, $d(\pi, \sigma) = d(\tau\pi, \tau\sigma)$. 

While this property may not be true of the \lovasz{} Bregman
divergences in general, the following theorem shows that this is true
for a large class of them.
\begin{theorem}
  Given a submodular function $f$, such that $\forall \sigma, \tau \in
  \Sigmas$, $h^f_{\tau \sigma} = \tau h^f_{\sigma}$, $\LB(x || \sigma)
  = \LB(\tau x || \tau \sigma)$.
\end{theorem}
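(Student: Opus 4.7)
The plan is to unfold the definition $\LB(x \| \sigma) = \langle x, h^f_{\sigma_x} - h^f_\sigma \rangle$ on both sides and reduce the equality to two facts: that permutations act compatibly on the induced ordering ($\sigma_{\tau x} = \tau \sigma_x$), and that simultaneous relabeling preserves the standard inner product.

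First I would verify that $\sigma_{\tau x} = \tau \sigma_x$. By the paper's convention $(\tau x)(i) = x(\tau^{-1}(i))$, so the largest entry of $\tau x$ sits at the index $i$ for which $\tau^{-1}(i) = \sigma_x(1)$, namely $i = \tau \sigma_x(1)$; iterating gives $\sigma_{\tau x}(j) = \tau \sigma_x(j)$ for every $j$. Combined with the hypothesis $h^f_{\tau \pi} = \tau h^f_\pi$ applied to both $\pi = \sigma_x$ and $\pi = \sigma$, this yields
\begin{align}
\LB(\tau x \| \tau \sigma) = \langle \tau x, h^f_{\sigma_{\tau x}} - h^f_{\tau \sigma} \rangle = \langle \tau x, \tau h^f_{\sigma_x} - \tau h^f_{\sigma} \rangle = \langle \tau x, \tau(h^f_{\sigma_x} - h^f_\sigma) \rangle.
\end{align}

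Next I would record the elementary lemma that $\langle \tau u, \tau v \rangle = \langle u, v \rangle$ for any vectors $u, v$ and permutation $\tau$: this is just the change of variables $j = \tau^{-1}(i)$ in $\sum_i u(\tau^{-1}(i)) v(\tau^{-1}(i))$. Applying it to the previous display with $u = x$ and $v = h^f_{\sigma_x} - h^f_\sigma$ collapses it to $\langle x, h^f_{\sigma_x} - h^f_\sigma \rangle = \LB(x \| \sigma)$, which is the desired identity.

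The only mildly tricky point will be keeping the direction of the two actions $\tau x$ and $\tau \sigma$ straight, since the paper's conventions (where $\sigma^{-1}(j)$ is the rank of item $j$) are opposite to those of some of the cited works; the remainder is a two-line chain of equalities. The hypothesis $h^f_{\tau \sigma} = \tau h^f_\sigma$ itself is a genuine restriction (it holds, for instance, when $f$ is a function of the cardinality or when $f$ is symmetric under $\tau$), but no further assumption is needed for the proof once that hypothesis is in place.
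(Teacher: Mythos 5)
Your proposal is correct and follows essentially the same route as the paper's proof: unfold $\LB(\tau x \| \tau\sigma) = \langle \tau x, h^f_{\sigma_{\tau x}} - h^f_{\tau\sigma}\rangle$, use the hypothesis to factor out $\tau$, and invoke invariance of the inner product under simultaneous relabeling (the paper phrases this via $\tau x = x\tau^{-1}$ and right-composition with $\tau^{-1}$, which is the same computation). Your explicit verification of $\sigma_{\tau x} = \tau\sigma_x$ is a welcome addition, since the paper leaves that step implicit.
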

\arxiv{\begin{proof}
Recall that $\tau x = x \tau^{-1}$. Hence
$\LB(\tau x || \tau \sigma ) = \langle \tau x, h^f_{\sigma_x \tau} - h^f_{\sigma \tau} \rangle = \langle x \tau^{-1}, h^f_{\sigma_x} \tau^{-1} - h^f_{\sigma} \tau^{-1} \langle = \LB(x || \sigma)$
\end{proof}\looseness-1}
This property seems a little demanding for a submodular function. But a large class of submodular functions can be seen to have this property. In fact, it can be verified that any cardinality based submodular function has this property. 
\begin{corollary}
Given a submodular function $f$ such that $f(X) = g(|X|)$ for some function $g$, then $\LB(x || \sigma) = \LB(\tau x, \tau \sigma)$.
\end{corollary}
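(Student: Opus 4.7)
The plan is to derive the corollary directly from the preceding theorem by verifying that cardinality-based submodular functions satisfy the hypothesis $h^f_{\tau\sigma} = \tau h^f_{\sigma}$ for all $\sigma, \tau \in \Sigmas$. Once this identity is established, the conclusion follows immediately from the theorem without further argument.

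First I would unpack the extreme-subgradient formula from Lemma~\ref{lem2}. When $f(X) = g(|X|)$, the increment $f(S^{\sigma}_j) - f(S^{\sigma}_{j-1})$ equals $g(j) - g(j-1) = \delta_g(j)$, which depends only on the position $j$ in the chain and not on the identity of the element $\sigma(j)$. Hence Lemma~\ref{lem2} gives $h^f_{\sigma}(\sigma(j)) = \delta_g(j)$ for all $j \in [n]$, or equivalently $h^f_{\sigma}(i) = \delta_g(\sigma^{-1}(i))$ when indexed by the ground-set element $i$.

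Next I would perform the key calculation. For the composed permutation $\tau\sigma$ I would write $h^f_{\tau\sigma}(i) = \delta_g\bigl((\tau\sigma)^{-1}(i)\bigr) = \delta_g\bigl(\sigma^{-1}\tau^{-1}(i)\bigr)$. On the other hand, using the convention $\tau v(i) = v(\tau^{-1}(i))$ introduced in the notation section, I would compute $(\tau h^f_{\sigma})(i) = h^f_{\sigma}(\tau^{-1}(i)) = \delta_g\bigl(\sigma^{-1}(\tau^{-1}(i))\bigr)$. These two expressions agree for every $i$, so $h^f_{\tau\sigma} = \tau h^f_{\sigma}$.

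Finally I would invoke the preceding theorem: since the hypothesis $h^f_{\tau\sigma} = \tau h^f_{\sigma}$ holds, we get $\LB(x \| \sigma) = \LB(\tau x \| \tau\sigma)$, which is the claim. The only subtle step is the notational bookkeeping of left versus right actions of $\tau$ on vectors and permutations; once the formula $h^f_{\sigma}(i) = \delta_g(\sigma^{-1}(i))$ is written out, the rest is a one-line check, so I do not anticipate any real obstacle beyond keeping the conventions from the introduction consistent.
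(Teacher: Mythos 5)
Your proposal is correct and follows essentially the same route as the paper: the paper's one-line justification ("the extreme points of the corresponding polyhedron are reorderings of each other," i.e.\ the polyhedron is a permutahedron) is precisely the identity $h^f_{\tau\sigma} = \tau h^f_{\sigma}$ that you verify explicitly via $h^f_{\sigma}(i) = \delta_g(\sigma^{-1}(i))$ before invoking the preceding theorem. Your version just makes the bookkeeping explicit; there is no gap.
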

This follows directly from Eqn.~\eqref{cardconmod} and observing that
the extreme points of the corresponding polyhedron are reorderings of
each other. In other words, in these cases the submodular polyhedron
forms a permutahedron. This property is true even for sums of such
functions and therefore for many of the special cases which we have
considered. \arxiv{For example, notice that the cut function $f(X) =
  |X| |V \backslash X|$ and the functions $f(X) = g(|X|)$ are both
  cardinality based
  functions.}\looseness-1

\begin{figure*}
  \centering
  \subfloat[LB2D]{\label{fig:lb2d}\includegraphics[width=0.16\textwidth]{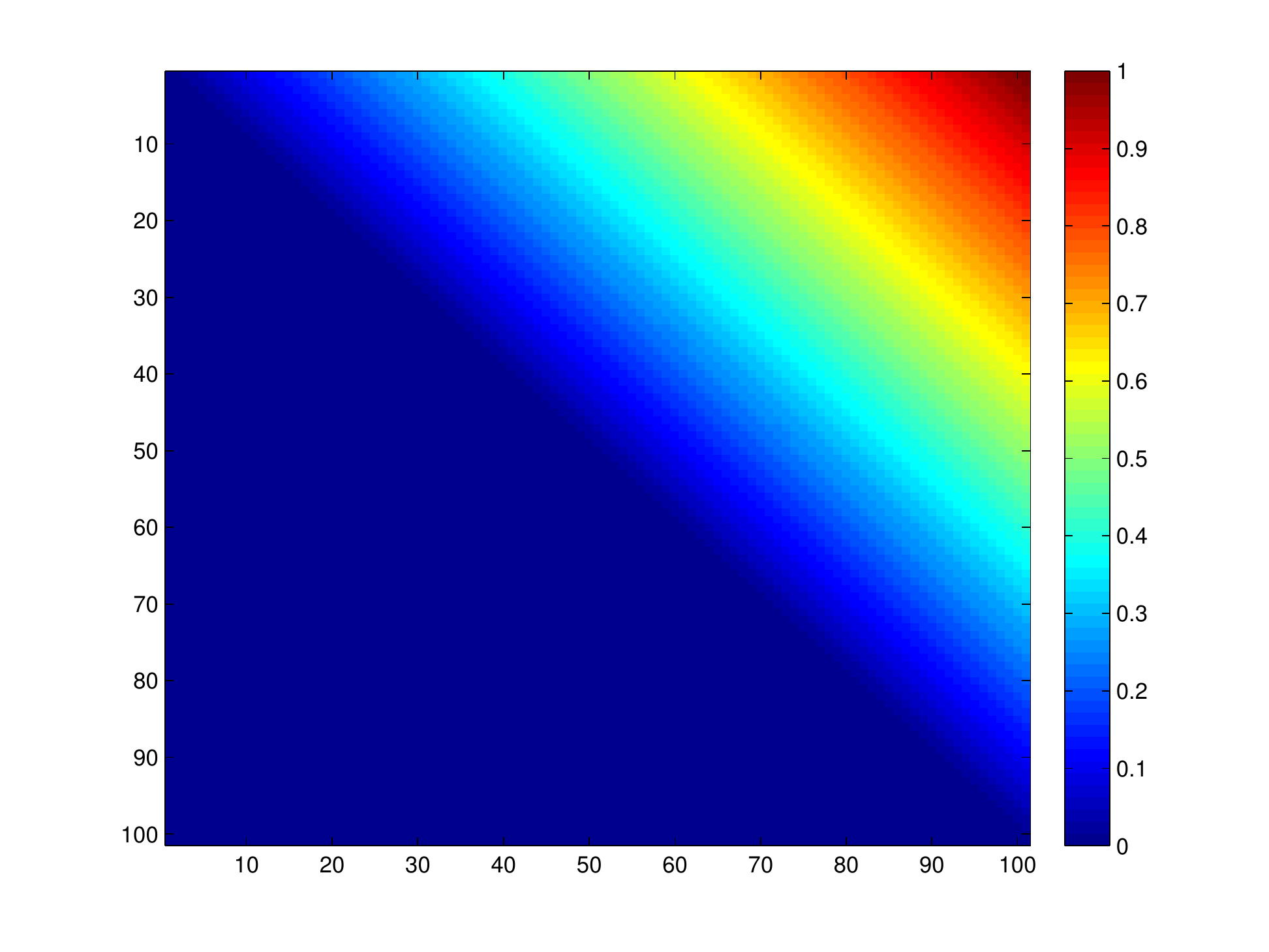}}
  ~
  \subfloat[LB3D1]{\label{fig:lb3d1}\includegraphics[width=0.16\textwidth]{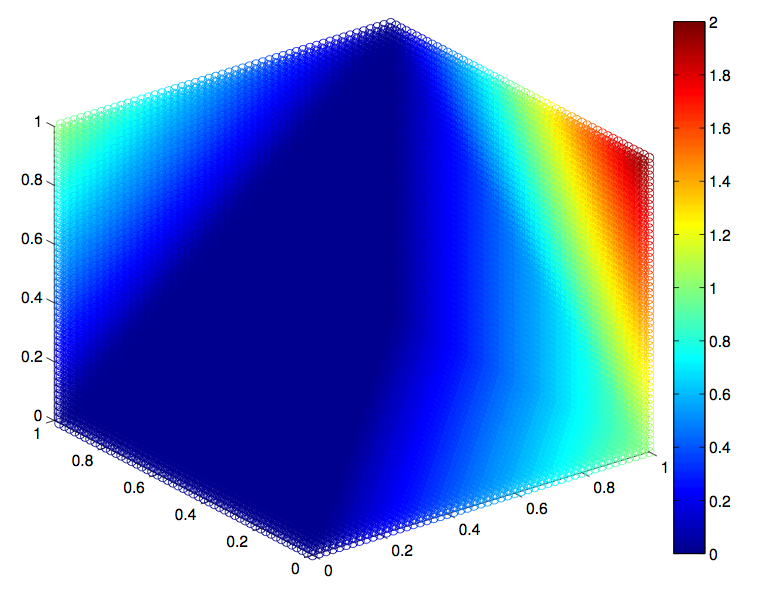}}
  ~
  \subfloat[LB3D2]{\label{fig:lb3d2}\includegraphics[width=0.16\textwidth]{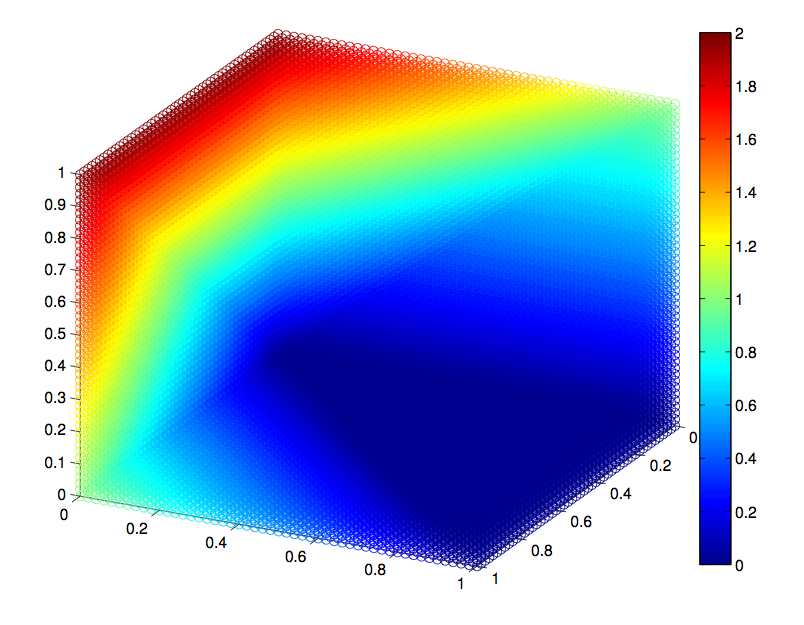}}
 \subfloat[KT2D]{\label{fig:lb2d}\includegraphics[width=0.16\textwidth]{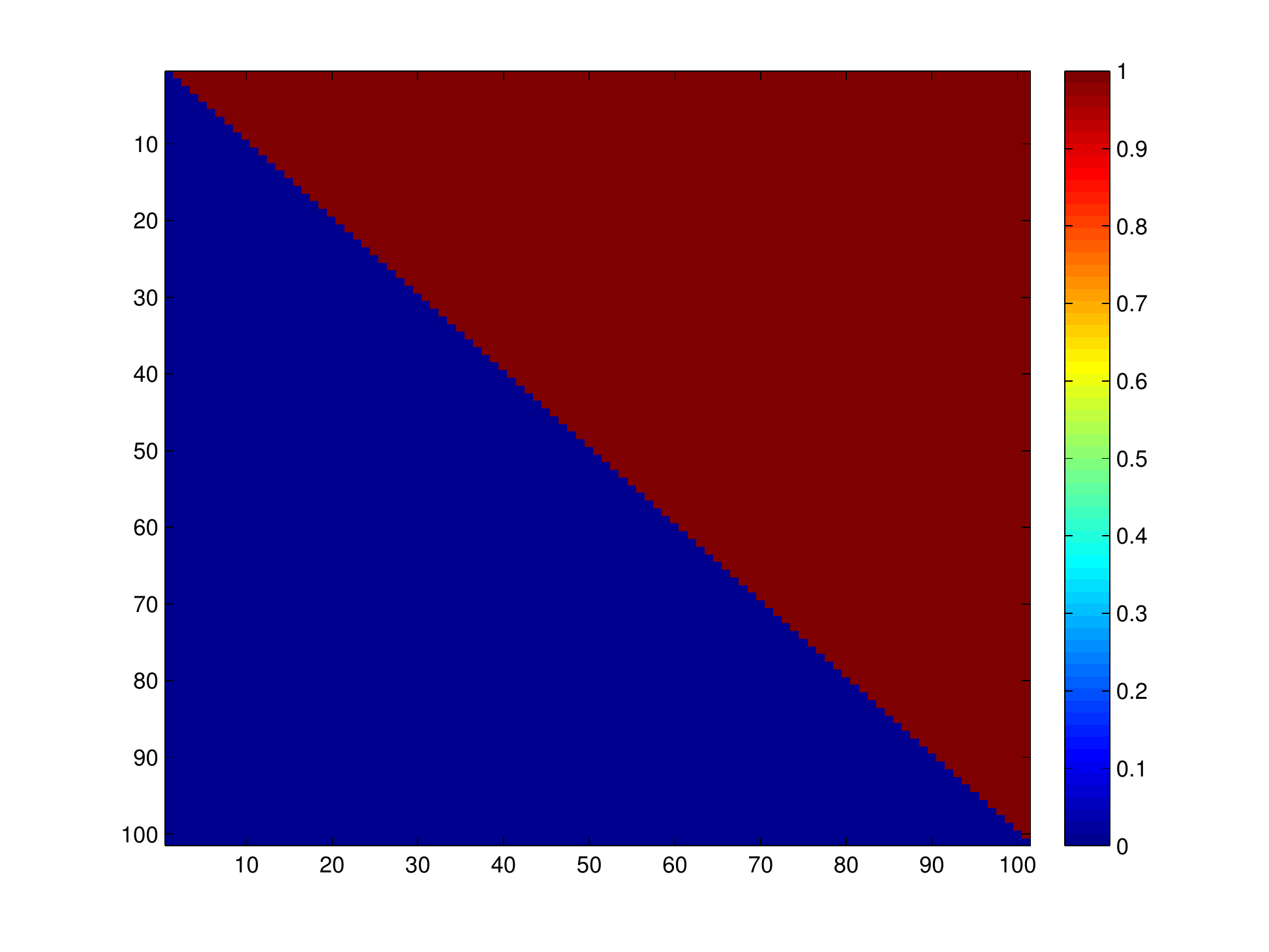}}
  \subfloat[KT3D1]{\label{fig:lb3d1}\includegraphics[width=0.16\textwidth]{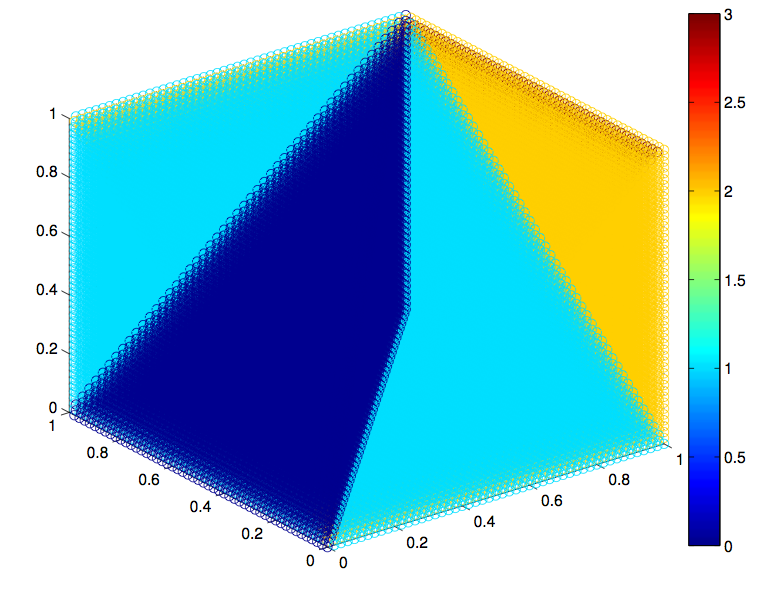}}
  \subfloat[KT3D2]{\label{fig:lb3d2}\includegraphics[width=0.16\textwidth]{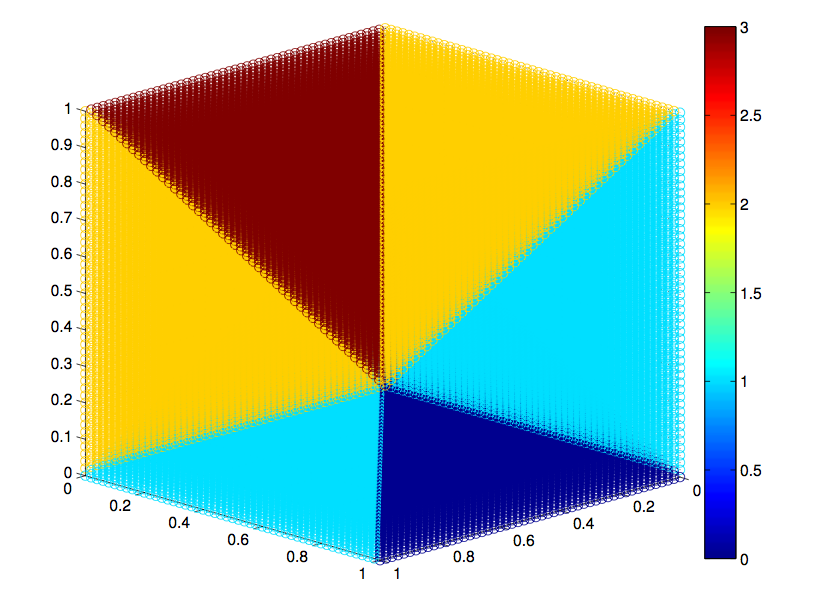}}
  \caption{A visualization of $\LB(x || \sigma)$ (left three) and $d_T(\sigma_x, \sigma)$ (right three). The figures shows a visualization in 2D, and two views in 3D for each, with $\sigma$ as $\{1,2\}$ and $\{1,2,3\}$ and $x \in [0, 1]^2$ and $[0, 1]^3$ respectively.}
  \label{fig:DivVis}
\end{figure*}

\paragraph{Dependence on the values and not just the orderings: }
We shall here analyze one key property of the LB divergence that is
not present in other permutation based divergences.
Consider the problem of combining rankings where, given a collection of scores $x^1, \cdots, x^n$, we want to come up with a joint ranking. An extreme case of this is where for some $x$ all the elements are the same. In this case $x$ expresses no preference in the joint ranking. Indeed it is easy to verify that for such an $x$, $\LB(x || \sigma) = 0, \forall \sigma$. Now given a $x$ where all the elements are almost equal (but not exactly equal), even though this vector is totally ordered, it expresses a very low confidence in it's ordering. We would expect for such an $x$, $\LB(x || \sigma)$ to be small for every $\sigma$. Indeed we have the following result:
\begin{theorem}
Given a monotone submodular function $f$ and any permutation $\sigma$, 
\begin{align}
\LB(x || \sigma) \leq \epsilon n (\max_j f(j) - \min_j f(j | V \backslash j)) \leq \epsilon n \max_j f(j) \nonumber
\end{align}
where $\epsilon = \max_{i, j} |x_i - x_j|$ and $f(j | A) = f(A \cup j) - f(A)$.
\end{theorem}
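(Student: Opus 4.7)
The plan is to exploit the fact that $\LB(x || \sigma)$ is invariant under adding a constant to $x$, which lets us replace $x$ by a vector of small magnitude, and then to bound the difference of two extreme subgradients componentwise using submodularity.

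First I would establish the invariance. By Lemma~\ref{lem2}, both $h^f_{\sigma_x}$ and $h^f_\sigma$ are extreme points of the base polytope, so their coordinates sum to $f(V)$, giving
\begin{align}
\sum_j (h^f_{\sigma_x}(j) - h^f_\sigma(j)) = f(V) - f(V) = 0. \nonumber
\end{align}
Consequently, for any constant $c \in \mathbb{R}$,
\begin{align}
\LB(x || \sigma) = \langle x, h^f_{\sigma_x} - h^f_\sigma \rangle = \sum_j (x_j - c)(h^f_{\sigma_x}(j) - h^f_\sigma(j)). \nonumber
\end{align}
Choosing $c = \min_j x_j$ (or any translate that puts $x - c\mathbf{1}$ in $[0,\epsilon]^n$) gives $|x_j - c| \leq \epsilon$ for every $j$, so by the triangle inequality
\begin{align}
\LB(x || \sigma) \leq \epsilon \sum_{j=1}^n |h^f_{\sigma_x}(j) - h^f_\sigma(j)|. \nonumber
\end{align}

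Next I would bound each term of this sum using submodularity. By Lemma~\ref{lem2}, each extreme subgradient coordinate $h^f_\pi(j)$ is a marginal gain $f(S \cup \{j\}) - f(S)$ for some $S \subseteq V \setminus \{j\}$. By submodularity of $f$, every such marginal gain lies in the interval $[f(j \mid V \setminus j), f(j)]$, so in particular
\begin{align}
|h^f_{\sigma_x}(j) - h^f_\sigma(j)| \leq f(j) - f(j \mid V \setminus j) \leq \max_j f(j) - \min_j f(j \mid V \setminus j). \nonumber
\end{align}
Summing over the $n$ coordinates establishes the first inequality $\LB(x || \sigma) \leq \epsilon n (\max_j f(j) - \min_j f(j \mid V \setminus j))$.

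For the second inequality, monotonicity of $f$ yields $f(j \mid V \setminus j) \geq 0$ for every $j$, so $\min_j f(j \mid V \setminus j) \geq 0$ and the first bound is at most $\epsilon n \max_j f(j)$. No step here seems especially delicate; the only point worth double-checking is the translation invariance argument, which relies on the fact that both subgradients live on the same affine hyperplane $\{h : \sum_j h_j = f(V)\}$ — exactly the statement that the base polytope is contained in that hyperplane.
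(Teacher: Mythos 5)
Your proof is correct and follows essentially the same route as the paper: translate $x$ by $\min_j x_j\mathbf{1}$ using the fact that both extreme subgradients lie on the hyperplane $\{h : \sum_j h_j = f(V)\}$, then bound the inner product, with each subgradient coordinate confined to $[f(j \mid V \setminus j),\, f(j)]$ by submodularity. The only cosmetic difference is that you apply the $\ell_\infty$--$\ell_1$ H\"older bound where the paper uses Cauchy--Schwarz with $\|r\|_2 \leq \epsilon\sqrt{n}$ and $\|h^f_{\sigma_r} - h^f_{\sigma}\|_2 \leq \sqrt{n}(\max_j f(j) - \min_j f(j \mid V \backslash j))$; both yield the identical constant $\epsilon n$.
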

\arxiv{\begin{proof}
Decompose $x = \min_j x_j \mathbf{1} + r$, where $r_i = x_i - \min_j x_j$. Notice that $|r_i| \leq \epsilon$. Moreover, $\sigma_x = \sigma_r$ and hence $\LB(x || \sigma) = \LB(\min_j x_j \mathbf{1} || \sigma) + \LB(r || \sigma) = \LB(r || \sigma)$ since $\langle \mathbf{1}, h^f_{\sigma_r} - h^f_{\sigma} \rangle= f(V) - f(V) = 0$. Now, $\LB(r || \sigma) = \langle r,  h^f_{\sigma_r} - h^f_{\sigma} \rangle \leq ||r||_2 ||h^f_{\sigma_r} - h^f_{\sigma}||_2$. Finally note that $||r||_2 \leq \epsilon \sqrt{n}$ and $||h^f_{\sigma_r} - h^f_{\sigma}||_2 \leq \sqrt{n}  (\max_j f(j) - \min_j f(j | V \backslash j))$ and combining the two, we get the result. The second part follows from the monotonicity of $f$.
\end{proof}}
The above theorem implies that if the vector $x$ is such that all it's elements are almost equal, then $\epsilon$ is small and the LB divergence is also proportionately small. This bound can be improved in certain cases. For example for the cut function, with $f(X) = |X| |V \backslash X|$, we have that $\LB(x || \sigma) \leq \epsilon d_T(\sigma_x, \sigma) \leq \epsilon n(n-1)/2$, where $d_T$ is the Kendall $\tau$. \looseness-1

\paragraph{Priority for higher rankings: }
We show yet another nice property of the LB divergence with respect to a natural priority in rankings. This property has to do intrinsically with the submodularity of the generator function. We have the following theorem, that demonstrates this:
\begin{lemma}
Given permutations $\sigma, \pi$, such that $\mathcal P(\sigma)$ and $\mathcal P(\pi)$ share a face (say $S_k^{\sigma} \not= S_k^{\pi}$) and $x \in \mathcal P(\pi)$), then $\LB(x || \sigma) = (x_k - x_{k+1}) (f(\sigma_x(k) | S_{k-1}^{\sigma}) - f(\sigma_x(k) | S_k^{\sigma}))$.
\end{lemma}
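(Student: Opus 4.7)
The plan is to directly expand $\LB(x\|\sigma) = \langle x, h^f_{\sigma_x} - h^f_\sigma\rangle$ using $\sigma_x = \pi$ (valid by Lemma~\ref{lem2.3} since $x \in \mathcal P(\pi)$; on the shared boundary both sides of the identity vanish, so without loss of generality take $x$ in the interior). First, I would invoke Lemma 6.18 of~\cite{fujishige2005submodular} to translate the hypothesis ``$\mathcal P(\sigma)$ and $\mathcal P(\pi)$ share a face with $S_k^\sigma \neq S_k^\pi$'' into the concrete statement that $\pi$ is obtained from $\sigma$ by transposing the adjacent positions $k$ and $k+1$: that is, $\sigma(j)=\pi(j)$ for $j\notin\{k,k+1\}$, $\sigma(k)=\pi(k+1)$, $\sigma(k+1)=\pi(k)$, and $S_j^\sigma = S_j^\pi$ for every $j\neq k$.

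Next, I would use these identifications to show that $h^f_\pi(v)=h^f_\sigma(v)$ for every $v\notin\{\pi(k),\pi(k+1)\}$. Indeed, for such $v=\pi(j)=\sigma(j)$ with $j\neq k,k+1$, both marginals equal $f(S_j^\pi) - f(S_{j-1}^\pi)$ since the relevant chain sets agree. Thus the inner product $\langle x, h^f_\pi - h^f_\sigma\rangle$ reduces to a two-term sum over the coordinates $\pi(k)$ and $\pi(k+1)$.

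Writing $A := S_{k-1}^\pi = S_{k-1}^\sigma$, $a := \pi(k) = \sigma_x(k)$ and $b := \pi(k+1) = \sigma_x(k+1)$, the two nontrivial differences are
\begin{align}
(h^f_\pi - h^f_\sigma)(a) &= f(a\mid A) - f(a\mid A\cup b), \\
(h^f_\pi - h^f_\sigma)(b) &= f(b\mid A\cup a) - f(b\mid A).
\end{align}
The key step is then the submodular ``diamond'' identity $f(a\mid A) + f(b\mid A\cup a) = f(b\mid A) + f(a\mid A\cup b)$, which rearranges to tell us that the two differences above are exact negatives of one another. Setting $\delta := f(a\mid A) - f(a\mid A\cup b)$, we obtain
\begin{align}
\LB(x\|\sigma) = x(a)\,\delta - x(b)\,\delta = (x(\sigma_x(k)) - x(\sigma_x(k+1)))\bigl(f(\sigma_x(k)\mid S_{k-1}^\sigma) - f(\sigma_x(k)\mid S_k^\sigma)\bigr),
\end{align}
which is precisely the stated formula (noting $S_k^\sigma = A\cup b$).

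The main obstacle is nothing deep, only the bookkeeping: keeping track of which chain sets $S_j^\sigma$ and $S_j^\pi$ coincide (all but $j=k$), correctly locating where $h^f_\pi - h^f_\sigma$ is nonzero, and invoking the submodular equality to collapse two separate marginal differences into a single scalar $\delta$ multiplying $x_k - x_{k+1}$.
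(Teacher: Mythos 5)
Your proof is correct and takes the same route the paper intends: the paper merely asserts the lemma ``directly follows from the definitions,'' and your computation --- identifying $\pi$ as the adjacent transposition of $\sigma$ at positions $k,k+1$ via Lemma 6.18, localizing $h^f_\pi - h^f_\sigma$ to the two coordinates $\pi(k),\pi(k+1)$, and collapsing the two marginal differences with the telescoping identity $f(a\mid A)+f(b\mid A\cup a)=f(b\mid A)+f(a\mid A\cup b)$ --- supplies exactly the bookkeeping that assertion elides. The only point worth flagging is notational: the paper's $x_k - x_{k+1}$ must be read, as you correctly do, as $x(\sigma_x(k)) - x(\sigma_x(k+1))$, the $k$-th and $(k+1)$-th largest entries of $x$, not its literal $k$-th and $(k+1)$-th coordinates.
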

This result directly follows from the definitions. Now consider the class of submodular function $f$ such that $\forall j, k \notin X, j \not= k, f(j | S) - f(j | S \cup k)$ is monotone decreasing as a function of $S$. An example of such a submodular function is again $f(X) = g(|X|)$, for a concave function $g$. Then it is clear that from the above Lemma that $\LB(x || \sigma)$ will be larger for smaller $k$. In other words, if $\pi$ and $\sigma$ differ in the starting of the ranking, the divergence is more than if $\pi$ and $\sigma$ differ somewhere towards the end of the ranking. This kind of weighting is more prominent for the class of functions which depend on the cardinality, i.e., $f(X) = g(|X|)$. Recall that many of our special cases belong to this class. Then we have that $\LB(x || \sigma) = \sum_{i = 1}^n \{x(\sigma_x(i)) - x(\sigma(i))\} \delta_g(i)$. Now since $\delta_g(1) \geq \delta_g(2) \geq \cdots \geq \delta_g(n)$, it then follows that if $\sigma_x$ and $\sigma$ differ in the start of the ranking, they are penalized more.\looseness-1

\paragraph{Extensions to partial orderings and top $m$-Lists:}
So far we considered notions of distances between a score $x$ and a complete permutation $\sigma$. 
Often we may not be interested in a distance to a total ordering $\sigma$, but just a distance to a say a top-$m$ list~\cite{klementiev2008unsupervised} or a partial ordering between elements~\cite{yue2007support, chakrabarti2008structured}.
The LB divergence also has a nice interpretation for both of these. In particular, in the context of top $m$ lists, we can use Eqn.~\eqref{topm}. This exactly corresponds to the divergence between different or possibly overlapping sets of $m$ objects. Moreover, if we are simply interested in the top $m$ elements without the orderings, we have Eqn.~\eqref{topm2}. A special case of this is when we may just be interested in the top value. Another interesting instance is of partial orderings, where we do not care about the total ordering. For example, in web ranking we often just care about the relevant and irrelevant documents and that the relevant ones should be placed above the irrelevant ones. We can then define a distance $\LB(x || \mathcal P)$ where $\mathcal P$ refers to a partial ordering by using the cut based \lovasz{} Bregman (Eqn.~\eqref{cut1}) and defining the graph to have edges corresponding to the partial ordering. For example if we are interested in a partial order $1 > 2, 3 > 2$ in the elements $\{1,2,3,4\}$, we can define $d_{1, 2} = d_{3,2} = 1$ with the rest $d_{ij} = 0$ in Eqn.~\eqref{cut1}. Defined in this way, the LB divergence then measures the distortion between a vector $x$ and the partial ordering $1 > 2, 3 > 2$. In all these cases, we see that the extensions to partial rankings are natural in our framework, without needing to significantly change the expressions to admit these generalizations. \looseness-1

\paragraph{\lovasz{}-Mallows model:}
In this section, we extend the notion of Mallows model to the LB divergence. We first define the Mallows model for the LB divergence:
\begin{align}\label{LBM}
p(x | \theta, \sigma) = \frac{\exp(-\theta \LB(x || \sigma))}{Z(\theta, \sigma)}, \;\; \theta \geq 0.
\end{align}
For this distribution to be a valid probability distribution, we
assume that the domain $\mathcal D$ of $x$ to be a bounded set (say
for example $[0, 1]^n$). We also assume that the domain is symmetric
over permutations (i.e., for all $\sigma \in \Sigmas$, if $x \in
\mathcal D, x\sigma \in \mathcal D$. Unlike the standard Mallow's
model, however, this is defined over scores (or valuations) as
opposed to permutations.

Given the class of LB divergences defining a probability distribution over such a symmetric set (i.e., the divergences are invariant over relabelings) it follows that $Z(\theta, \sigma) = Z(\theta)$. The reason for this is:
\begin{align*}
Z(\theta, \sigma) &= \int_x \exp(-\theta \LB(x, \sigma)) dx  \\
&= \int_x \exp(-\theta \LB(x\sigma^{-1}, \sigma_0)) dx\nonumber\\
&= \int_{x^{\prime}} \exp(-\theta \LB(x^{\prime}, \sigma_0)) dx^{\prime} 
= Z(\theta)
\end{align*}
where $\sigma_0 = \{1, 2, \cdots, n\}$. 
We can also define an extended Mallows model for combining rankings, analogous to~\cite{lebanon2002cranking}. Unlike the Mallows model however this is a model over permutations given a collection of vectors $\mathcal X = \{x_1, \cdots, x_n\}$ and parameters $\Theta = \{\theta_1, \cdots, \theta_n\}$. 
\begin{align}\label{ELBM}
p(\sigma | \Theta, \mathcal X) =\frac{\exp(-\sum_{i = 1}^n \theta_i \LB(x_i || \sigma))}{Z(\Theta, \mathcal X)}
\end{align}
This model can be used to combine rankings using the LB divergences,
in a manner akin to Cranking~\cite{lebanon2002cranking}. This extended
\lovasz-Mallows model also admits an interesting Bayesian
interpretation, thereby providing a generative view to this
model:\looseness-1
\begin{align}
p(\sigma | \Theta, \mathcal X) \propto p(\sigma) \prod_{i = 1}^n p(x_i | \sigma, \theta_i).
\end{align}
Again this directly follows from the fact that in this case, in the \lovasz-Mallows model, the normalizing constants (which are independent of $\sigma$) cancel out. We shall actually see some very interesting connections between this conditional model and web ranking.

\arxiv{
\begin{figure}
  \centering
  \subfloat[Norm-1]{\label{fig:elb1}\includegraphics[width=0.17\textwidth]{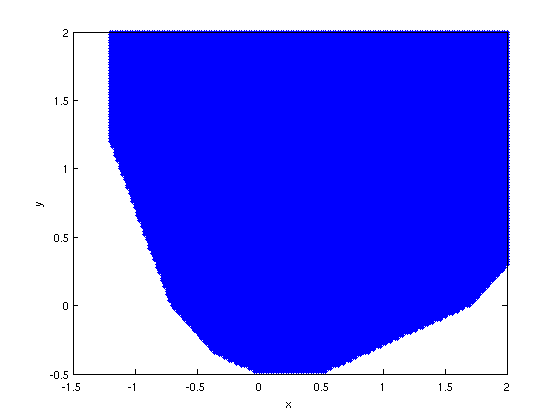}}
  ~
  \subfloat[Norm-2]{\label{fig:lb3d1}\includegraphics[width=0.17\textwidth]{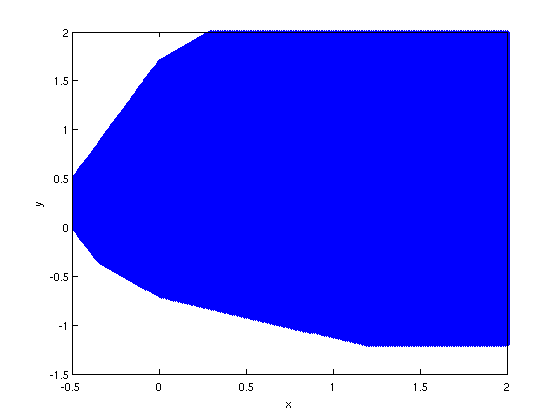}}
\caption{A visualization of the sub-level sets of $d_{|\hat{f}|}(x, y)$ for $y = [1, 2]$ (left) and $y = [2, 1]$ (right).}
\label{normvis}
\end{figure}
\paragraph{\lovasz{} Bregman as a regularizer:}
Consider the problem where we want to minimize a convex function subject to a permutation constraint. In these cases, the \lovasz{} Bregman can be seen as a regularizer since the problem $\min_{x: x \in \mathcal P(\sigma)} \psi(x)$ is equivalent to $\min_{x: \LB(x, \sigma) = 0} \psi(x)$. Alternatively, we may consider:\looseness-1
\begin{align}\label{eq:1}
\min_{x} \psi(x) + \lambda \LB(x || \sigma).
\end{align}
The above problem still remains a convex optimization problem since the LB divergence is convex in it's first argument. This problem is intimately related to the proximal methods investigated in~\cite{bach2010structured}. Indeed, Eqn.~\ref{eq:1} is equivalent to $\min_x \psi(x) - \lambda \langle h^{\sigma}_f, x \rangle + \hat{f}(x)$ and is of the form $\min_x \phi(x) + \hat{f}(x)$, and hence the methods in~\cite{bach2010structured} apply to this problem. Also, the connection between the LB divergences and the submodular norms can be made more explicit in the following manner. Consider the extended \lovasz-Bregman divergences $d_{|\hat{f}|}$. When $f$ is monotone, $0 \in \partial_{|\hat{f}|}(0)$ and hence choosing an appropriate subgradient map ensures that $d_{|\hat{f}|}(x, 0) = \hat{f}(|x|)$, which are the polyhedral norms defined in~\cite{bach2010structured}.

Moreover, we get other interesting norm structures when $y \neq 0$. For example, Figure~\ref{normvis} shows a visualization of the sub-level sets of $d_{|\hat{f}|}(x, y)$ for two choices of $y$ with different orderings. We see that these sublevel sets are open and naturally show preference to the orderings defined via the particular $y$.}
\arxiv{\begin{figure*}
  \centering
  \subfloat[LB2Dclus]{\label{fig:lb2dclus}\includegraphics[height=0.14\textwidth]{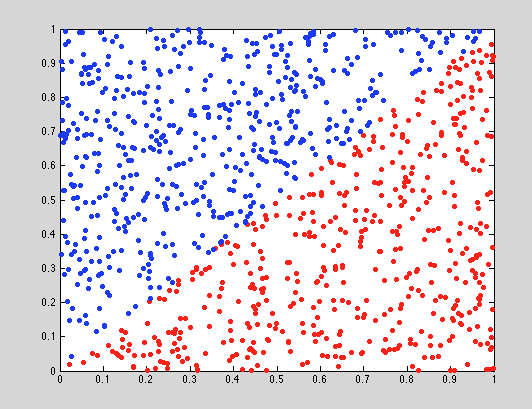}}
  ~
  \subfloat[LBEuc2Dclus]{\label{fig:lbeuc2dclus}\includegraphics[height=0.14\textwidth]{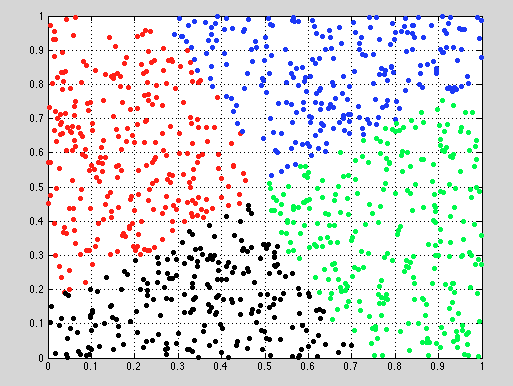}}
 ~
  \subfloat[LB3Dclus]{\label{fig:lb3dclus}\includegraphics[height=0.14\textwidth]{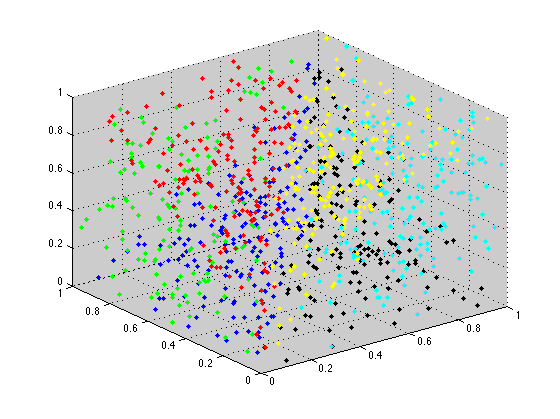}}
~
 \subfloat[LBmax3Dclus]{\label{fig:lb3dmaxclus}\includegraphics[height=0.14\textwidth]{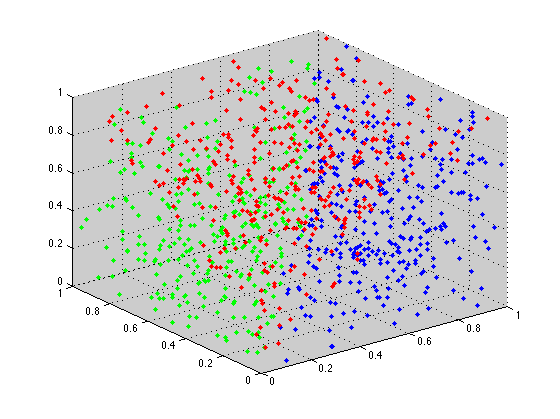}}
  \caption{Clustering with the (a) \lovasz-Bregman in 2D, (b) Combination of \lovasz-Bregman and Euclidean in 2D, (c) \lovasz-Bregman in 3D and (d) \lovasz-Bregman with top $1$-list in 3D. All these use $f(X) = \sqrt{|X|}$}
  \label{fig:ClusVis}
\end{figure*}}
\section{Applications}
\paragraph{Rank Aggregation:}
As argued above, the LB divergence is a natural model for the problem of combining scores, where both the ordering and the valuations are provided. If we ignore the values, but just consider the rankings, this then becomes rank aggregation. A natural choice in such problems is the Kendall $\tau$ distance~\cite{lebanon2002cranking, klementiev2008unsupervised, meilua2007consensus}. On the other hand, if we consider only the values without explicitly modeling the orderings, then this becomes an incarnation of boosting~\cite{freund2003efficient}. The \lovasz-Bregman divergence tries to combine both aspects of this
problem -- by combining orderings using a permutation based
divergence, while simultaneously using the additional information of
the confidence in the orderings provided by the valuations. We can then pose this problem as: 
 \looseness-1
\begin{align} \label{meaneq} 
  \sigma \in
  \argmin_{\sigma^{\prime} \in \Sigmas} \sum_{i = 1}^n d_{\hat{f}}(x^i ||
  \sigma^{\prime})
\end{align}
The above notion of the representative ordering (also known as the mean ordering) is very common in many applications~\cite{banerjee2005} and has also been used in the context of combining rankings~\cite{meilua2007consensus, lebanon2002cranking, klementiev2008unsupervised}. Unfortunately this problem in the context of the permutation based metrics were shown to be NP hard~\cite{bartholdi1989voting}.
Surprisingly for the LB divergence this problem is easy (and has a closed form). In particular, the representative permutation is exactly the ordering corresponding to the arithmetic mean of the elements in $\mathcal X$.\looseness-1
\begin{lemma}~\cite{rkiyersubmodBregman2012} \label{lemmamean}
Given a submodular function $f$, the \lovasz{} Bregman representative (Eqn.~\eqref{meaneq}) is $\sigma = \sigma_{\mu}$, where $\mu = \frac{1}{n} \sum_{i = 1}^n x^i$
\end{lemma}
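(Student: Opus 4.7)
The plan is to expand the objective in Eqn.~\eqref{meaneq} using the representation $\LB(x^i \| \sigma') = \hat{f}(x^i) - \langle x^i, h^f_{\sigma'}\rangle$ and then reduce to a linear optimization over the extreme subgradients of $\hat{f}$. Concretely, I would first write
\begin{align*}
\sum_{i=1}^n \LB(x^i \| \sigma')
= \sum_{i=1}^n \hat{f}(x^i) - \Bigl\langle \sum_{i=1}^n x^i,\, h^f_{\sigma'}\Bigr\rangle
= \sum_{i=1}^n \hat{f}(x^i) - n\,\langle \mu, h^f_{\sigma'}\rangle,
\end{align*}
where $\mu = \frac{1}{n}\sum_i x^i$. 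The first term is independent of $\sigma'$, so the minimization over $\sigma' \in \Sigmas$ is equivalent to maximizing $\langle \mu, h^f_{\sigma'}\rangle$ over all permutations.

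Next I would invoke Lemma~\ref{lem2}, which identifies each $h^f_{\sigma'}$ as an extreme point of the base polytope of $f$, together with the greedy characterization of the \lovasz{} extension due to Edmonds: for any vector $\mu$,
\begin{align*}
\hat{f}(\mu) = \max_{\sigma' \in \Sigmas} \langle \mu, h^f_{\sigma'}\rangle,
\end{align*}
with the maximum attained at any $\sigma' = \sigma_\mu$ that sorts the entries of $\mu$ in decreasing order. Combining this with the reduction above yields that $\sigma = \sigma_\mu$ is a minimizer of Eqn.~\eqref{meaneq}, and the minimum value is $\sum_i \hat{f}(x^i) - n\hat{f}(\mu)$.

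Since every step is a direct substitution followed by an application of Edmonds' greedy formula, there is no real obstacle; the only subtlety is the non-uniqueness issue when $\mu$ has ties. In that case $\Sigmas_\mu$ contains several permutations, and the argument above shows that every element of $\Sigmas_\mu$ attains the maximum of $\langle \mu, h^f_{\sigma'}\rangle$, so the set of minimizers is exactly $\Sigmas_\mu$ (and any $\sigma_\mu$ may be chosen as the representative), consistent with the statement of the lemma.
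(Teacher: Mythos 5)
Your proof is correct and is essentially the argument behind the cited result: expand $\LB(x^i \| \sigma') = \hat{f}(x^i) - \langle x^i, h^f_{\sigma'}\rangle$, drop the $\sigma'$-independent term, and apply Edmonds' greedy characterization $\hat{f}(\mu) = \max_{\sigma'}\langle \mu, h^f_{\sigma'}\rangle$, attained at $\sigma_\mu$ (this is the Bregman-mean argument of Banerjee et al.\ specialized to the \lovasz{} extension, since the residual objective equals $n\,\LB(\mu\|\sigma')$ up to an additive constant). The one small overstatement is your final claim that the minimizer set is \emph{exactly} $\Sigmas_\mu$ when $\mu$ has ties: that direction requires the additional hypothesis of Lemma~\ref{iffneg} that the polyhedron of $f$ contains all possible extreme points, since otherwise (e.g.\ for modular $f$) permutations outside $\Sigmas_\mu$ also attain $\hat{f}(\mu)$ --- but the lemma only asserts that $\sigma_\mu$ is a minimizer, which your argument establishes.
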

This result builds on the known result for Bregman
divergences~\cite{banerjee2005}. This seems somewhat surprising at
first. Notice, however, that the arithmetic mean uses additional
information about the scores and it’s confidence, as opposed to just
the orderings. In this context, the result then seems reasonable since
we would expect that the representatives be closely related to the
ordering of the arithmetic mean of the objects. We shall also see that
this notion has in fact been ubiquitously but unintentionally used in
the web ranking and information retrieval communities.\looseness-1

We illustrate the utility of the \lovasz{} Bregman rank aggregation through the following argument.  Assume that a particular vector $x$ is uninformative about the true ordering (i.e, the values of $x$ are almost equal). Then with the LB divergence and any permutation $\pi$, $d(x || \pi) \approx 0$, and hence this vector will not contribute to the mean ordering. Instead if we use a permutation based metric, it will ignore the values but consider only the permutation. As a result, the mean ordering tends to consider such vectors $x$ which are uninformative about the true ordering. As an example, consider a set of scores: $\mathcal X = \{1.9, 2\}, \{1.8, 2\}, \{1.95, 2\}, \{2, 1\}, \{2.5, 1.2\}$. The representative of this collection as seen by a permutation based metric would be the permutation $\{1, 2\}$ though the former three vectors have very low confidence. The arithmetic mean of these vectors is however $\{2.03, 1.64\}$ and the \lovasz{} Bregman representative would be $\{2, 1\}$.\looseness-1

The arithmetic mean also provides a notion of confidence of the population. In particular, if the total variation~\cite{frbach1} of the arithmetic mean is small, it implies that the population is not confident about it’s ordering, while if the variation is high, it provides a certificate of a homogeneous population. Figure~\ref{fig:DivVis} provides a visualization the \lovasz-Bregman divergence using the cut function and the Kendall $\tau$ metric, visualized in 2 and 3 dimensions respectively. We see the similarity between the two divergences and at the same time, the dependence on the ``scores'' in the \lovasz-Bregman case. \looseness-1

\paragraph{Learning to Rank: } We investigate a specific instance of
the rank aggregation problem with reference to the problem of ``learning
to rank.'' A large class of algorithms have been proposed for this
problem -- see~\cite{liu2009learning} for a survey on this. A specific
class of algorithms for this problem have focused on maximum margin
learning using ranking based loss functions (see~\cite{yue2007support,
  chakrabarti2008structured} and references therein). While we have
seen that the ranking based losses themselves are instances of the LB
divergence, the feature functions are also closely related.

In particular, given a query $q$, we denote a feature vector
corresponding to document $i \in \{1, 2, \cdots, n\}$ as $x_i \in
\mathbb{R}^d$, where each element of $x_i$ denotes a quality of 
document $i$ based on a particular indicator or feature. Denote
$\mathcal X = \{x_1, \cdots, x_n\}$. We assume we have $d$ feature
functions (one might be for example a match with the title, another
might be pagerank, etc). Denote $x_i^j$ as the score of the
$j^{\mbox{th}}$ feature corresponding to document $i$ and $x^j \in
\mathbb{R}^n$ as the score vector corresponding to feature $j$ over
all the documents. In other words, $x^j = (x^j_1, x^j_2, \cdots,
x^j_n)$. One possible choice of feature function is:
\begin{align} \label{featurefn}
\phi(\mathcal X, \sigma) = \sum_{j = 1}^d w_j \LB(x^j || \sigma)
\end{align}
for a weight vector $w \in \mathbb{R}^d$. Given a particular weight
vector $w$, the inference problem then is to find the permutation
$\sigma$ which minimizes $\phi(\mathcal X, \sigma)$. Thanks to
Lemma~\ref{lemmamean}, the permutation $\sigma$ is exactly the
ordering of the vector $\sum_{j = 1}^n w_j x^j$. It is not hard to see
that this exactly corresponds to ordering the scores $w^{\top} x_i$
for $i \in \{1, 2, \cdots, n\}$. Interestingly many of the feature
functions used in~\cite{yue2007support, chakrabarti2008structured} are
forms closely related Eqn.~\eqref{featurefn}. In fact the motivation
to define these feature functions is exactly that the inference
problem for a given set of weights $w$ be solved by simply ordering
the scores $w^{\top} x_i$ for every $i \in \{1, 2, \cdots,
n\}$~\cite{chakrabarti2008structured}. We see that through
Eqn.~\eqref{featurefn}, we have a large class of possible feature
functions for this problem.

We also point out a connection between the learning to rank problem
and the \lovasz-Mallows model. In particular, recent
work~\cite{dubey2009conditional} defined a conditional probability
model over permutations as:
\begin{align}\label{condmodel}
p(\sigma | w, \mathcal X) = \frac{\exp(w^{\top} \phi(\mathcal X, \sigma))}{Z}. 
\end{align}
This conditional model is then exactly the extended \lovasz-Mallows
model of Eqn.~\eqref{ELBM} when $\phi$ is defined as in
Eqn.~\eqref{featurefn}. The conditional models used
in~\cite{dubey2009conditional} are in fact closely related to this and
correspondingly Eqn.~\eqref{condmodel} offers a large class of
conditional ranking models for the learning to rank problem.

\paragraph{Clustering:}
A natural generalization of rank aggregation is the problem of clustering. In this context, we assume a heterogeneous model, where the data is represented as mixtures of ranking models, with each mixture representing a homogeneous population. It is natural to define a clustering objective in such scenarios. Assume a set of representatives $\Sigma = \{\sigma^1, \cdots, \sigma^k\}$ and a set of clusters $\mathcal C = \{\mathcal C^1, \mathcal C^2, \cdots, \mathcal C^k\}$. The clustering objective is then:
$\min_{\mathcal C, \Sigma} \sum_{j = 1}^k \sum_{i: x_i \in \mathcal C_j} \LB(x_i || \sigma_i)$.
As shown in~\cite{rkiyersubmodBregman2012}, a simple k-means style algorithm finds a local minima of the above objective. Moreover due to simplicity of obtaining the means in this case, this algorithm is extremely scalable and practical.

\arxiv{One additional property of the \lovasz-Bregman divergences is that they are easily amenable to other Bregman divergences as well. Since the sum of Bregman divergences is also a Bregman divergence, the k-means algorithm will go through almost identically if we add other Bregman divergences (for example, the Euclidean distance). We demonstrate the clustering patterns obtained through these divergences in Figure~\ref{fig:ClusVis} for randomly sampled points in $2$ and $3$ D. The second figure (from left) demonstrates the combined clustering using the \lovasz-Bregman and squared-Euclidean distance, which shows additional dependence on the values through the Euclidean distance. Similarly the last figure (from left), provides a clustering using the top $m$ objective, with $m = 1$ (Eqn.~\eqref{topm}). \looseness-1}

\section{Discussion}
To our knowledge, this work is the first introduces the notion of
``score based divergences'' in preference and ranking based
learning. Many of the results in this paper are due to some
interesting properties of the \lovasz{} extension and Bregman
divergences. This also provides interesting connections between web
ranking and the permutation based metrics. This idea is mildly related
to the work of~\cite{tehrani2012preference} where they use the Choquet
integral (of which the \lovasz{} extension is a special case) for
preference learning. Unlike our paper, however, they do not focus on
the divergences formed by the integral. Finally, it will be
interesting to use these ideas in real world applications involving
rank aggregation, clustering, and learning to rank.\looseness-1

{\bf Acknowledgments:} We thank Matthai Phillipose, Stefanie Jegelka
and the melodi-lab submodular group at UWEE for discussions and the anonymous
reviewers for very useful reviews. This material is based upon work
supported by the National Science Foundation under Grant
No. (IIS-1162606), and is also supported by a Google, a Microsoft, and
an Intel research award.  
\bibliographystyle{plain}
\bibliography{../Combined_Bib/submod}
\end{document}